\documentclass{article}



\usepackage{iclr2021_conference,times}


\usepackage[utf8]{inputenc} 
\usepackage[T1]{fontenc}    
\usepackage[colorlinks,citecolor=blue,urlcolor=blue]{hyperref}
\usepackage{url}            
\usepackage{booktabs}       
\usepackage{amsfonts}       
\usepackage{nicefrac}       
\usepackage{microtype}      
\usepackage{graphicx,subfigure}
\usepackage{xcolor}

\usepackage{algorithm,algpseudocode}
\usepackage{cancel}
\usepackage{multibib}
\usepackage{goku}

\def\adv{\mathrm{adv}}

\def\Ex{\mathbf{E}}
\def\hF{\widehat{F}}
\def\hL{\widehat{L}}
\def\hlambda{\widehat{\lambda}}
\def\hP{\widehat{P}}
\def\hR{\widehat{R}}
\def\htheta{\widehat{\theta}}
\def\IF{\mathrm{IF}}

\def\Pr{\mathbf{P}}
\def\reals{\mathbf{R}}

\usepackage{enumitem}

\setlist{leftmargin=*,noitemsep}

\title{SenSeI: Sensitive Set Invariance for Enforcing Individual Fairness}

%

\author{%
  Mikhail Yurochkin \\
  IBM Research \\
  MIT-IBM Watson AI Lab \\
  \texttt{mikhail.yurochkin@ibm.com} \\
  \And
  Yuekai Sun \\
  Department of Statistics\\
  University of Michigan\\
  \texttt{yuekai@umich.edu} \\
}

\iclrfinalcopy  
\begin{document}

\maketitle

\begin{abstract}
In this paper, we cast fair machine learning as invariant machine learning. We first formulate a version of individual fairness that enforces invariance on certain sensitive sets. We then design a transport-based regularizer that enforces this version of individual fairness and develop an algorithm to minimize the regularizer efficiently. Our theoretical results guarantee the proposed approach trains certifiably fair ML models. Finally, in the experimental studies we demonstrate improved fairness metrics in comparison to several recent fair training procedures on three ML tasks that are susceptible to algorithmic bias.
\end{abstract}

\section{Introduction}

As machine learning (ML) models replace humans in high-stakes decision-making and decision-support roles, concern regarding the consequences of algorithmic bias is growing. For example, ML models are routinely used in criminal justice and welfare to supplement humans, but they may have racial, class, or geographic biases \citep{metz2020Algorithm}. In response, researchers proposed many formal definitions of algorithmic fairness as a first step towards combating algorithmic bias. 

Broadly speaking, there are two kinds of definitions of algorithmic fairness: group fairness and individual fairness. 
In this paper, we focus on enforcing individual fairness. At a high-level, the idea of individual fairness is the requirement that a fair algorithm should treat similar individuals similarly. Individual fairness was dismissed as impractical because there is no consensus on which users are similar for many ML tasks. Fortunately, there is a flurry of recent work that addresses this issue \citep{ilvento2019Metric,wang2019Empirical,yurochkin2020Training,mukherjee2020Two}. In this paper, we assume there is a similarity metric for the ML task at hand and consider the task of enforcing individual fairness. Our main contributions are:

\begin{enumerate}
\item we define \emph{distributional individual fairness}, a variant of \citeauthor{dwork2011Fairness}'s original definition of individual fairness that is (i) more amenable to statistical analysis and (ii) easier to enforce by regularization;
\item we develop a stochastic approximation algorithm to enforce distributional individual fairness when training smooth ML models;
\item we show that the stochastic approximation algorithm converges and the trained ML model generalizes under standard conditions;
\item we demonstrate the efficacy of the approach on three ML tasks that are susceptible to algorithmic bias: income-level classification, occupation prediction, and toxic comment detection.
\end{enumerate}

\section{Enforcing individual fairness with Sensitive Set Invariance (SenSeI)}
\label{sec:SenSeI}

\subsection{A transport-based definition of individual fairness}

Let $\cX$ and $\cY$ be the space of inputs and outputs respectively for the supervised learning task at hand. For example, in classification tasks, $\cY$ may be the probability simplex. An ML model is a function $h:\cX\to\cY$ in a space of functions $\cH$ (\eg\ the set of all neural nets with a certain architecture). \citet{dwork2011Fairness} define \emph{individual fairness} as $L$-Lipschitz continuity of an ML model $h$ with respect to appropriate metrics on $\cX$ and $\cY$:
\begin{equation}
d_{\cY}(h(x),h(x')) \le Ld_{\cX}(x,x')
\label{eq:individual-fairness}
\end{equation}
for all $x,x'\in\cX$. The choice of $d_{\cY}$ is often determined by the form of the output. For example, if the ML model outputs a vector of the logits, then we may pick the Euclidean norm as $d_{\cY}$ \citep{kannan2018Adversarial,garg2018Counterfactual}. The metric $d_{\cX}$ is the crux of \eqref{eq:individual-fairness} because it encodes our intuition of which inputs are similar for the ML task at hand. For example, in natural language processing tasks, $d_{\cX}$ may be a metric on word/sentence embeddings that ignores variation in certain sensitive directions. In light of the importance of the similarity metric in \eqref{eq:individual-fairness} to enforcing individual fairness, there is also a line of work on learning the similarity metric from data \citep{ilvento2019Metric,wang2019Empirical,mukherjee2020Two}. In our experiments, we adapt the methods from \citet{yurochkin2020Training} to learn similarity metrics.

Although intuitive, individual fairness is statistically and computationally intractable. Statistically, it is generally impossible to detect violations of individual fairness on zero measure subset of the sample space. Computationally, individual fairness is a Lipschitz restriction, and such restrictions are hard to enforce. In this paper, we address both issues by lifting \eqref{eq:individual-fairness} to the space of probability distributions on $\cX$ to obtain an ``average case'' version of individual fairness. This version (i) is more amenable to statistical analysis, (ii) is easy to enforce by minimizing a data-dependent regularizer, and (iii) preserves the intuition behind \citet{dwork2011Fairness}'s original definition of individual fairness.

\begin{definition}[distributional individual fairness (DIF)]
\label{def:distributional-individual-fairness}
Let $\eps$, $\delta > 0$ be tolerance parameters and $\Delta(\cX\times\cX)$ be the set of probability measures on $\cX\times\cX$. Define
\begin{equation}
R(h) \triangleq \left\{\,\begin{aligned}
& \sup\nolimits_{\Pi\in\Delta(\cX\times\cX)} & & \Ex_\Pi\big[d_{\cY}(h(X),h(X'))\big] \\
& \st      & & \Ex_\Pi\big[d_{\cX}(X,X')\big] \le \eps \\
&          & & \Pi(\cdot,\cX) = P_X
\end{aligned}\,\right\}.
\label{eq:fair-regularizer}
\end{equation}
where $P_X$ is the (marginal) distribution of the inputs in the ML task at hand. An ML model $h$ is $(\eps,\delta)$-distributionally individually fair (DIF) iff $R(h) \le \delta$.
\end{definition}

We remark that DIF only depends on $h$ and $P_X$. It does not depend on the (conditional) distribution of the labels $P_{Y\mid X}$, so it does not depend on the performance of the ML model. In other words, it is possible for a model to perform poorly and be perfectly DIF (\eg\ the constant model $h(x) = 0$). 

The optimization problem in \eqref{eq:fair-regularizer} formalizes \emph{correspondence studies} in the empirical literature \citep{bertrand2016Field}. Here is a prominent example.

\begin{example}
\citeauthor{bertrand2004Are} studied racial bias in the US labor market. The investigators responded to help-wanted ads in Boston and Chicago newspapers with fictitious resumes. To manipulate the perception of race, they randomly assigned African-American or white sounding names to the resumes. The investigators concluded there is discrimination against African-Americans because the resumes assigned white names received 50\% more callbacks for interviews than the resumes assigned African-Americans names. 

We view \citeauthor{bertrand2004Are}'s investigation as evaluating the objective in  \eqref{eq:monge-fair-regularizer} at a special $T$. Let $\cX$ be the space of resumes, and $h:\cX\to\{0,1\}$ be the decision rule that decides whether a resume receives a callback. \citeauthor{bertrand2004Are} implicitly pick the $T$ that reassigns the name on a resume from an African-American sounding name to a white one (or vice versa) and measures discrimination with the difference between callback rates before and after reassignment:
\[
\Ex_P\big[1\{h(X)\ne h(T(X))\}\big] = \Pr\{h(X)\ne h(T(X))\}.
\]
\end{example}

We consider distributional individual fairness a variant of \citeauthor{dwork2011Fairness}'s original definition. It is not a fundamentally new definition of algorithmic fairness because it encodes the same intuition as \citeauthor{dwork2011Fairness}'s original definition. Most importantly, it remains an individual notion of algorithmic fairness because it compares individuals to similar (close in $d_{\cX})$ individuals. 


\subsection{DIF versus individual fairness}

It is hard to directly compare \citeauthor{dwork2011Fairness}'s original definition of individual fairness and DIF directly. First, they are parameterized differently: the original definition \eqref{eq:individual-fairness} is parameterized by a Lipschitz constant $L$, while DIF is parameterized by an $(\eps,\delta)$ pair. Intuitively,  \eqref{eq:individual-fairness} enforces (approximate) invariance at all scales (at any $\eps > 0$), while DIF only enforces invariance at one scale (determined by the input tolerance parameter). Second, \eqref{eq:individual-fairness} enforces invariance uniformly on $\cX$, while \eqref{eq:fair-regularizer} enforces invariance on average. Although DIF seems a weaker notion of algorithmic fairness than \eqref{eq:individual-fairness} (average fairness vs uniform fairness), DIF is actually more stringent in some ways because the constraints in \eqref{eq:fair-regularizer} are looser. This is evident in the Mong\'{e} version of the optimization problem in \eqref{eq:fair-regularizer}:
\begin{equation}
\begin{aligned}
& \sup\nolimits_{T:\cX\to\cX} & & \Ex_P\big[d_{\cY}(h(X),h(T(X)))\big] \\
& \st      & & \Ex_P\big[d_{\cX}(X,T(X))\big] \le \eps.
\end{aligned}
\label{eq:monge-fair-regularizer}
\end{equation}
The map corresponding to \eqref{eq:individual-fairness} 
\[
T_\IF(x) \triangleq \argmax_{d_{\cX}(x,x')\le \eps}d_{\cY}(h(x),h(x'))
\]
maps each $x$ to its worse-case counterpart $x'$ in \eqref{eq:individual-fairness}. It is not hard to see that $T_{\IF}$ is a feasible map for \eqref{eq:monge-fair-regularizer}, but it may not be optimal. This is because \eqref{eq:monge-fair-regularizer} only restricts $T$ to transport points by at most $\eps$ \emph{on average}; the optimal $T$ may transport \emph{some} points by more than $\eps$.

To make the two definitions more comparable, we consider an $\eps$-$\delta$ version of individual fairness. A model $h:\cX\to\cY$ satisfies $(\eps,\delta)$-individual fairness at $x\in\cX$ if and only if
\begin{equation}
d_{\cY}(h(x),h(T_\IF(x))) = \sup\nolimits_{d_{\cX}(x,x')\le\eps}d_{\cY}(h(x),h(x')) \le \delta.
\label{eq:eps-delta-individual-fairness}
\end{equation}
To arrive at \eqref{eq:eps-delta-individual-fairness}, we start by observing that \eqref{eq:individual-fairness} is equivalent to 
\[\textstyle
\sup_{x\in\cX}d_{\cY}(h(x),h(T_\IF(x))) \le L\eps\text{ for any }x\in\cX\text{ and }\eps > 0.
\]
We fix $x$ and $\eps$ and re-parameterize the right side with $\delta$ to obtain \eqref{eq:eps-delta-individual-fairness}. It is possible to show that if $h$ is $(\eps,\delta)$-DIF, then there exists $\delta'$ such that it satisfies $(\eps,\delta')$-individual fairness for ``most'' $x$'s. We formally state this result in a proposition.



\begin{proposition}
\label{prop:DIF-implies-eps-delta-IF}
If $h:\cX\to\cY$ is $(\eps,\delta)$-DIF, then 
\[\textstyle
P_X(d_{\cY}(h(X),h(T_\IF(X))) \ge \tau) \le \frac{\delta}{\tau}\text{ for any }\tau > 0.
\]
\end{proposition}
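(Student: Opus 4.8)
The plan is to exhibit a single feasible coupling for the program defining $R(h)$ whose objective value equals $\Ex_{P_X}\big[d_\cY(h(X),h(T_\IF(X)))\big]$, deduce that this expectation is at most $\delta$, and then invoke Markov's inequality.

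First I would take the Monge-type coupling $\Pi_0 \triangleq (\mathrm{id},T_\IF)_\#P_X$, i.e.\ the law of $(X,T_\IF(X))$ with $X\sim P_X$. Its first marginal is $P_X$ by construction, so the marginal constraint $\Pi_0(\cdot,\cX)=P_X$ in \eqref{eq:fair-regularizer} holds automatically. For the transport-budget constraint, observe that $T_\IF(x)$ belongs to the feasible set $\{x':d_\cX(x,x')\le\eps\}$ of the argmax that defines it, hence $d_\cX(x,T_\IF(x))\le\eps$ pointwise; integrating against $P_X$ gives $\Ex_{\Pi_0}\big[d_\cX(X,X')\big]=\Ex_{P_X}\big[d_\cX(X,T_\IF(X))\big]\le\eps$. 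Thus $\Pi_0$ is feasible for \eqref{eq:fair-regularizer}. Since $h$ is $(\eps,\delta)$-DIF,
\[
\Ex_{P_X}\big[d_\cY(h(X),h(T_\IF(X)))\big]=\Ex_{\Pi_0}\big[d_\cY(h(X),h(X'))\big]\le R(h)\le\delta .
\]
Applying Markov's inequality to the nonnegative random variable $d_\cY(h(X),h(T_\IF(X)))$ then yields $P_X\big(d_\cY(h(X),h(T_\IF(X)))\ge\tau\big)\le\delta/\tau$ for every $\tau>0$, which is the claim.

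The only point requiring care is that $x\mapsto T_\IF(x)$ be measurable (and that the argmax be attained), so that $\Pi_0$ is genuinely a probability measure on $\cX\times\cX$ and $x\mapsto d_\cY(h(x),h(T_\IF(x)))$ is measurable; I expect this to be the main technical obstacle. It can be handled by a standard measurable-selection argument under mild continuity/compactness assumptions on $h$ and $d_\cX$, or sidestepped by replacing $T_\IF$ with a measurable map $T$ obeying $d_\cX(x,T(x))\le\eps$ and $d_\cY(h(x),h(T(x)))\ge d_\cY(h(x),h(T_\IF(x)))-\eta$, running the identical argument to get $\Ex_{P_X}\big[d_\cY(h(X),h(T_\IF(X)))\big]\le\delta+\eta$, and letting $\eta\downarrow 0$. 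Everything else is routine.
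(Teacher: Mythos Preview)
Your proposal is correct and follows essentially the same approach as the paper's own proof: show that the Monge map $T_\IF$ induces a feasible coupling for \eqref{eq:fair-regularizer}, deduce $\Ex_{P_X}\big[d_\cY(h(X),h(T_\IF(X)))\big]\le R(h)\le\delta$, and apply Markov's inequality. Your version is in fact more careful than the paper's, which does not discuss the measurability/selection issue you flag.
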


\subsection{Enforcing DIF}

There are two general approaches to enforcing invariance conditions such as \eqref{eq:fair-regularizer}. The first is distributionally robust optimization (DRO):
\begin{equation}
\min\nolimits_{h\in\cH}L_{\adv}(h) \triangleq \sup\nolimits_{P':W_d(P,P')\le\eps}\Ex_{P'}\big[\ell(Y',h(X'))\big],
\label{eq:adversarial-training}
\end{equation}
where $\ell$ is a loss function and $W_d(P,Q)$ is the Wasserstein distance between distributions on $\cX\times\cY$ induced by the transport cost function
$c((x,y),(x'y')) \triangleq d_{\cX}(x,x') + \infty\cdot\ones\{y\ne y'\}.$
This approach is very similar to adversarial training, and it was considered by \citet{yurochkin2020Training} for enforcing (their modification of) individual fairness. In this paper, we consider a regularization approach to enforcing \eqref{eq:fair-regularizer}:
\begin{equation}
\begin{aligned}
\min\nolimits_{h\in\cH}L(h) + \rho R(h),\quad L(h) \triangleq \Ex\big[\ell(Y,h(X))\big],
\end{aligned}
\label{eq:fair-regularization}
\end{equation}
where $\rho > 0$ is a regularization parameter and the regularizer $R$ is defined in \eqref{eq:fair-regularizer}. An obvious advantage of the regularization approach is it allows the user to fine-tune the trade-off between goodness-of-fit and fairness by adjusting $\rho$ (see Figure \ref{fig:varying-rho}; in Figure \ref{fig:varying-eps} of Appendix \ref{supp:trade-off} we show the lack of such flexibility in the method of \citet{yurochkin2020Training}). As we shall see, although the two approaches share many theoretical properties, we show in Section \ref{sec:exps} that the regularization approach has superior empirical performance. We defer a more in-depth comparison between the two approaches to subsection \ref{sec:sensr-comparison}.

\begin{figure*}
\begin{center}
\subfigure[$\rho=0.1$]{\includegraphics[width=0.24\textwidth]{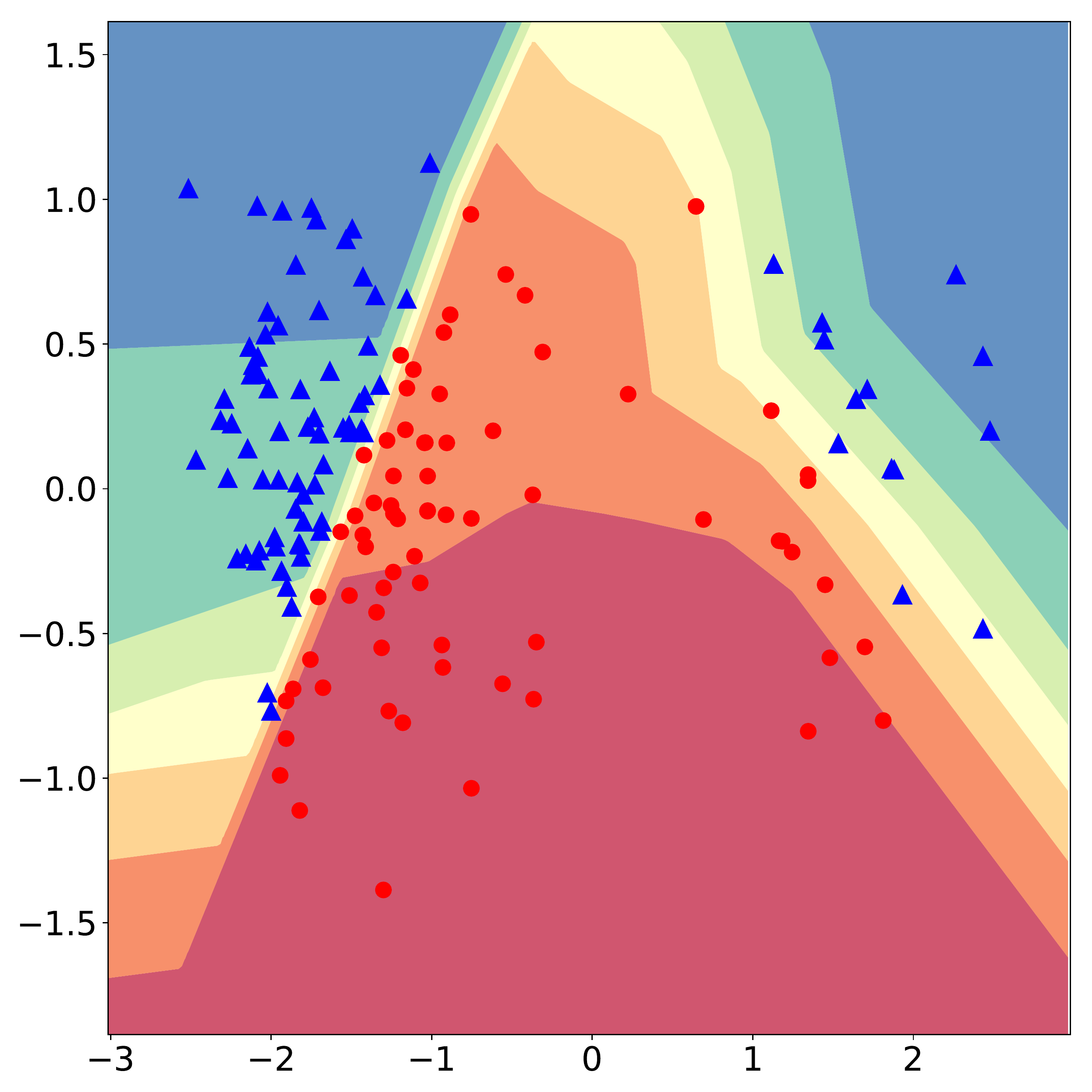}}
\subfigure[$\rho=1$]{\includegraphics[width=0.24\textwidth]{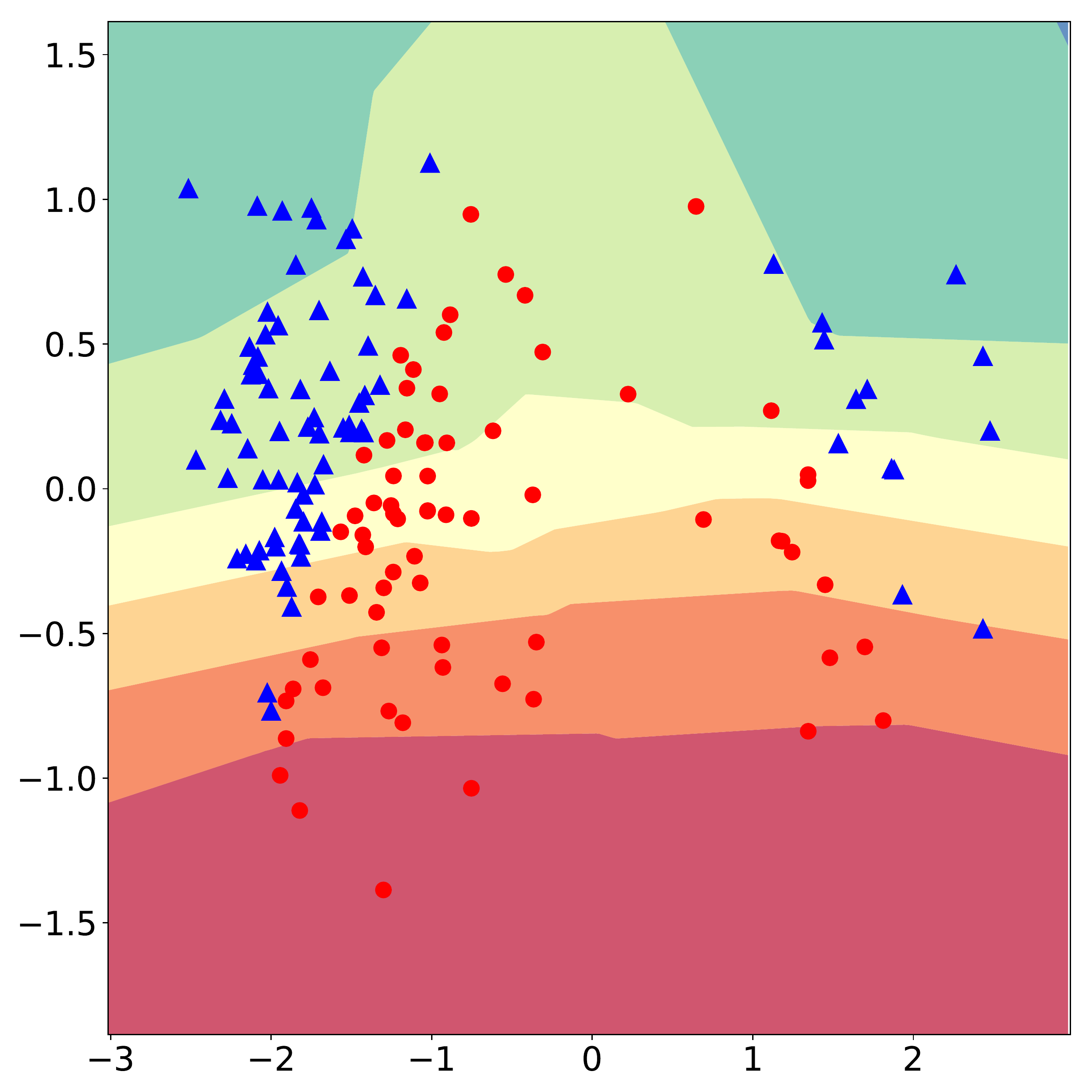}}
\subfigure[$\rho=3$]{\includegraphics[width=0.24\textwidth]{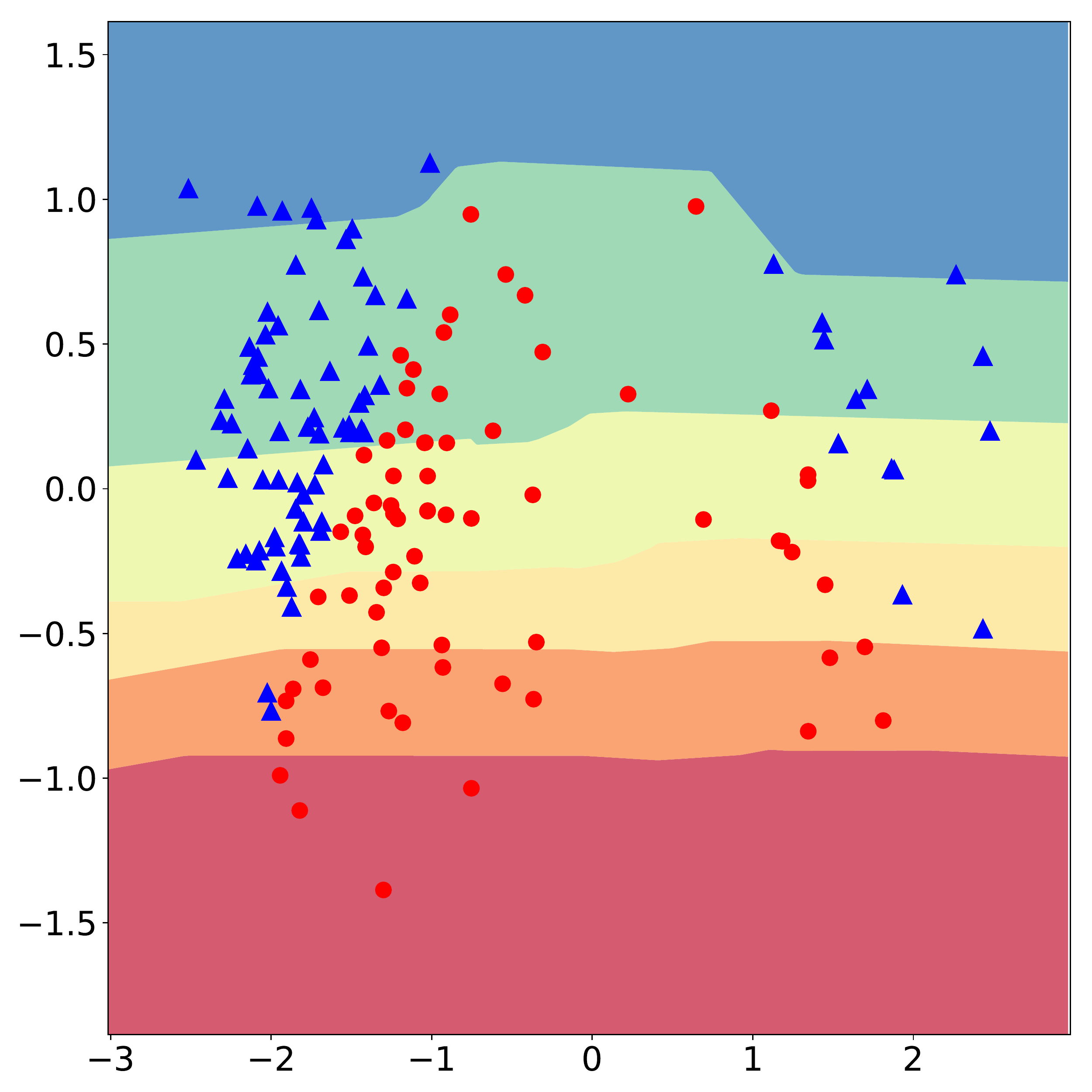}}
\subfigure[$\rho=5$]{\includegraphics[width=0.24\textwidth]{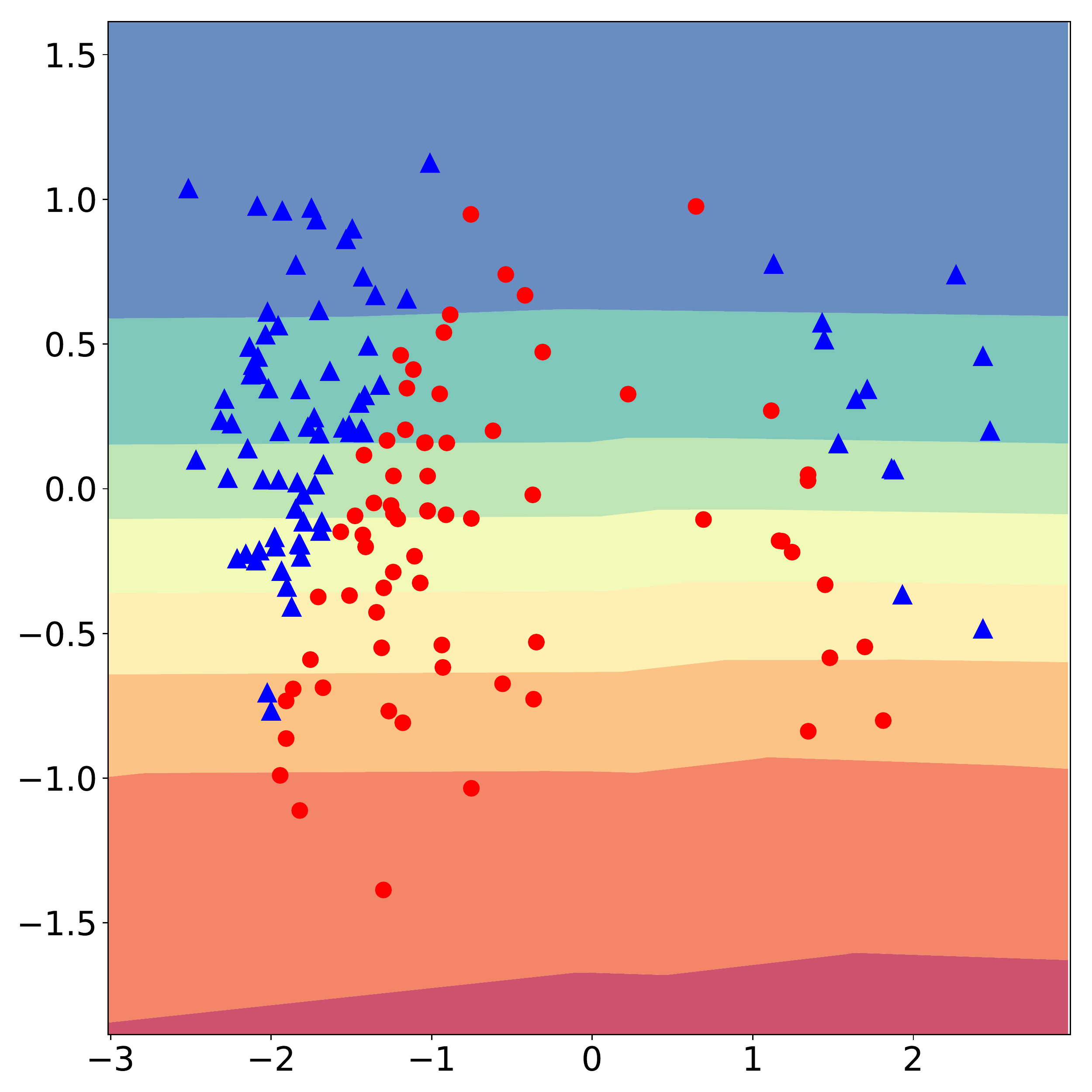}}
\vskip -0.1in
\caption{The decision surface of a one hidden layer neural network trained with SenSeI as the fair regularization parameter $\rho$ varies. In this ML task, points on a horizontal line (points with identical $y$-values) are similar, but the training data is biased because $P_{Y\mid X}$ is not constant on horizontal lines. We see that fair regularization (eventually) corrects the bias in the data.}
\label{fig:varying-rho}
\end{center}
\vskip -0.2in
\end{figure*}

At first blush, the regularized risk minimization problem \eqref{eq:fair-regularization} is not amenable to stochastic optimization because $R$ is not an expected value of a function of the training examples. Fortunately, by appealing to duality, it is possible to obtain a dual formulation of $R$ that is suitable for stochastic optimization. 

\begin{theorem}[dual form of $R$]
\label{thm:fair-regularizer-dual-form}
If $d_{\cY}(h(x),h(x')) - \lambda d_{\cX}(x,x')$ is continuous (in $(x,x')$) for any $\lambda \ge 0$, then
\[
\begin{aligned}
R(h) = \inf\nolimits_{\lambda \ge 0}\{\lambda\eps + \Ex_{P_X}\big[r_\lambda(h,X)\big]\},\,
r_\lambda(h,X) \triangleq \sup\nolimits_{x'\in\cX}\{d_{\cY}(h(X),h(x')) - \lambda d_{\cX}(X,x')\}.
\end{aligned}
\]
\end{theorem}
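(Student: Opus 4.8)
The plan is to read \eqref{eq:fair-regularizer} as an infinite-dimensional linear program in the coupling $\Pi$ and to dualize the transport-budget constraint $\Ex_\Pi[d_{\cX}(X,X')]\le\eps$ with a scalar multiplier $\lambda\ge0$, keeping the marginal constraint $\Pi(\cdot,\cX)=P_X$ inside the inner problem. For fixed $\lambda$ the inner value is
\[
\lambda\eps+\sup\nolimits_{\Pi:\,\Pi(\cdot,\cX)=P_X}\Ex_\Pi\big[d_{\cY}(h(X),h(X'))-\lambda d_{\cX}(X,X')\big],
\]
and weak duality already yields $R(h)\le\inf_{\lambda\ge0}\{\lambda\eps+\Ex_{P_X}[r_\lambda(h,X)]\}$ once we identify the inner supremum. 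The first key step is precisely this identification. Since only the first marginal of $\Pi$ is pinned down, disintegrate $\Pi(dx,dx')=P_X(dx)\,\Pi_x(dx')$ with $\Pi_x\in\Delta(\cX)$ free; the objective then separates over $x$, so the supremum over couplings collapses to the pointwise supremum over $x'\in\cX$ for $P_X$-a.e. $x$, giving exactly $\lambda\eps+\Ex_{P_X}[r_\lambda(h,X)]$. Making this rigorous is an interchange-of-supremum-and-integration argument: the continuity hypothesis makes $(x,x')\mapsto d_{\cY}(h(x),h(x'))-\lambda d_{\cX}(x,x')$ a Carath\'eodory (hence normal) integrand, so $x\mapsto r_\lambda(h,x)$ is measurable and, for each $\eta>0$, admits a measurable near-maximizer $x\mapsto T(x)$; pushing $P_X$ forward through $(\mathrm{id},T)$ shows the pointwise bound is attained up to $\eta$.

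The second and main step is strong duality (the reverse inequality). I would argue via the perturbation function $V(t)\triangleq\sup\{\Ex_\Pi[d_{\cY}(h(X),h(X'))]:\Ex_\Pi[d_{\cX}(X,X')]\le t,\ \Pi(\cdot,\cX)=P_X\}$, so that $R(h)=V(\eps)$. Two observations suffice: $V$ is nondecreasing in $t$ (a larger budget enlarges the feasible set); and $V$ is concave, because given couplings $\Pi_1,\Pi_2$ feasible for budgets $t_1,t_2$, the mixture $\alpha\Pi_1+(1-\alpha)\Pi_2$ is feasible for $\alpha t_1+(1-\alpha)t_2$ — crucially, mixing preserves the first marginal $P_X$ — and the objective is linear in $\Pi$. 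A proper concave function finite at the interior point $\eps$ has a supergradient $\lambda$ there, and monotonicity forces $\lambda\ge0$; then $V(t)-\lambda t\le V(\eps)-\lambda\eps$ for all $t\ge0$, i.e. $\sup_{t\ge0}(V(t)-\lambda t)=V(\eps)-\lambda\eps$. For this $\lambda$, maximizing over the budget $t$ inside $\sup_{t\ge0}(V(t)-\lambda t)$ (the optimal $t$ equals $\Ex_\Pi[d_{\cX}]$) turns it into $\sup_{\Pi:\Pi(\cdot,\cX)=P_X}\Ex_\Pi[d_{\cY}(h(X),h(X'))-\lambda d_{\cX}(X,X')]$, which by the same disintegration equals $\Ex_{P_X}[r_\lambda(h,X)]$. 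Hence $\inf_{\mu\ge0}\{\mu\eps+\Ex_{P_X}[r_\mu(h,X)]\}\le\lambda\eps+\Ex_{P_X}[r_\lambda(h,X)]=V(\eps)=R(h)$, and combining with weak duality closes the proof.

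The delicate points are (i) the measurable selection used to pull $\sup_{x'}$ inside the expectation, which is exactly what the continuity assumption buys us, and (ii) the existence of a finite nonnegative supergradient of $V$ at $\eps$; the latter is immediate when $\cY$ is bounded under $d_{\cY}$ (so $V$ is finite everywhere), and otherwise one either restricts to $h$ with $r_\lambda$ finite for some $\lambda$ or notes that both sides are $+\infty$. As an alternative to this self-contained route, one may simply invoke known Wasserstein distributionally-robust-optimization duality theorems (e.g. of Sinha, Namkoong and Duchi, of Blanchet and Murthy, or of Gao and Kleywegt), of which the claim is the special case with transport cost $d_{\cX}$, reward $d_{\cY}(h(X),h(\cdot))$, and the adversary's first marginal pinned to $P_X$.
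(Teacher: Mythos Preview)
Your proposal is correct and overlaps substantially with the paper's proof, but the strong-duality step is organized differently. The paper treats \eqref{eq:fair-regularizer} as an infinite-dimensional LP, verifies Slater's condition via the diagonal coupling $d\Pi(x,x')=\ones\{x=x'\}dP_X(x)$ (which has zero transport cost, hence is strictly feasible since $\eps>0$), and then cites Luenberger's strong-duality theorem (Theorem 8.7.1) to swap the $\sup$ and $\inf$. You instead work with the perturbation function $V(t)$ directly, proving concavity and monotonicity by hand and extracting a nonnegative supergradient at $\eps$; this is more self-contained and makes the role of $\eps>0$ explicit (it is an interior point of $\mathrm{dom}\,V$), at the cost of the finiteness caveat you correctly flag, which the paper's black-box citation absorbs. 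For the interchange of supremum and expectation, the two proofs are essentially identical: the paper also reduces to deterministic transport maps and invokes normal integrands (Rockafellar--Wets, Corollary 14.34 and Theorem 14.60), which is exactly your measurable-near-maximizer argument. The paper likewise remarks, as you do, that one may bypass the whole proof by citing off-the-shelf DRO duality (it names Blanchet--Murthy).
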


We defer the proof of this result to Appendix \ref{sec:proofs}. In light of the dual form of the fair regularizer, the regularized risk minimization problem is equivalently
\begin{equation}
\min\nolimits_{h\in\cH}\inf\nolimits_{\lambda\ge 0}\Ex_P\big[\ell(Y,h(X)) + \rho(\lambda\eps + r_\lambda(h,X))\big],
\label{eq:fair-regularization-dual-form}
\end{equation}
where $r_\lambda$ is defined in Theorem \ref{thm:fair-regularizer-dual-form}, which has the form of minimizing an expected value of a function of the training examples. To optimize with respect to $h$, we parameterize the function space $\cH$ with a parameter $\theta\in\Theta\subset\reals^d$ and consider a stochastic approximation approach to finding the best parameter. Let $w \triangleq (\theta,\lambda)$ and $Z \triangleq (X,Y)$. The stochastic optimization problem we wish to solve is
\begin{equation}
\min\nolimits_{w\in\Theta\times\reals_+} F(w) \triangleq \Ex_P\big[f(w,Z)\big],\quad
f(w,Z) \triangleq \ell(Y,h_\theta(X)) + \rho(\lambda\eps + r_\lambda(h_\theta,X)).
\label{eq:parametric-fair-regularization}
\end{equation}
It is not hard to see that \eqref{eq:parametric-fair-regularization} is a stochastic optimization problem. We summarize Sensitive Set Invariance (SenSeI) for stochastic optimization of \eqref{eq:parametric-fair-regularization} in Algorithm \ref{alg:sensei}. 

\begin{algorithm*}
  \caption{SenSeI: Sensitive Set Invariance}
  \label{alg:sensei}
\begin{algorithmic}
  \State {\bfseries inputs:} starting point $(\htheta_0,\hlambda_0)$, step sizes $(\eta_t)$
  \Repeat
  \State $(X_{t_1},Y_{t_1}),\dots,(X_{t_B},Y_{t_B}) \sim P$ \Comment{sample mini-batch from $P$}
  \State $x_{t_b}' \gets \argmax_{x'\in\cX}\{d_{\cY}(h_{\htheta_t}(X_{t_b}),h_{\htheta_t}(x')) - \hlambda_t d_{\cX}(X_{t_b},x')\}$, $b\in[B]$
  \Comment{generate worst-case examples}
  \State $\hlambda_{t+1} \gets 
  \max\{0,\hlambda_t - \eta_t\rho(\eps - \frac1B\sum_{b=1}^Bd_{\cX}(X_{t_b},x_{t_b}')\}$,
  \State $\htheta_{t+1} \gets 
  \htheta_t - \eta_t(\frac{1}{B}\sum_{b=1}^B\partial_\theta\{\ell(Y_{t_b},h_{\htheta_t}(X_{t_b}))\} + \rho\partial_\theta\{d_{\cY}(h_{\htheta_t}(X_{t_b}),h_{\htheta_t}(x_{t_b}'))\})$
  \Until{converged}
\end{algorithmic}
\end{algorithm*}

\subsection{Adversarial training vs fair regularization}
\label{sec:sensr-comparison}

Adversarial training is a popular approach to training invariant ML models. It was originally developed to defend ML models against adversarial examples \citep{goodfellow2014Explaining,madry2017Deep}. There are many versions of adversarial training; the Wasserstein distributionally robust optimization (DRO) version by \citet{sinha2017Certifying} is most closely related to \eqref{eq:adversarial-training}. The direct goal of adversarial training is training ML models whose risk is small on adversarial examples, and the robust risk that adversarial training seeks to minimize \eqref{eq:adversarial-training} is exactly the risk on adversarial examples. 

An indirect consequence of adversarial training is invariance to imperceptible changes to the inputs. Recall an adversarial example is a training example \emph{with the same label as a non-adversarial training example} whose inputs differ imperceptibly from those of a training example.  Thus (successful) adversarial training leads to ML models that ignore such imperceptible changes and are thus invariant in ``imperceptible neighborhoods'' of the training examples. 

Unlike adversarial training, which leads to invariance as an indirect consequence of adversarial robustness, fair regularization enforces fairness by explicitly minimizing a fair regularizer. A key benefit of invariance regularization is it permits practitioners to fine-tune the trade-off between goodness-of-fit and invariance by adjusting the regularization parameter (see Figure \ref{fig:varying-rho}).


\subsection{Related work}


There are three lines of work on enforcing individual fairness. There is a line of work that enforces group fairness with respect to many (possibly overlapping) groups to avoid disparate treatment of individuals  \citep{hebert-johnson2017Calibration,kearns2017Preventing,kim2018Multiaccuracy,kim2018Fairness}. At a high-level, these methods repeatedly find groups in which group fairness is violated and updates the ML model to correct violations. Compared to these methods that approximate individual fairness with group fairness, we directly enforce individual fairness.

There is another line of work on enforcing individual fairness without knowledge of the similarity metric. \citet{Gillen2018Online,rothblum2018Probably,jung2019Eliciting} reduce the problem of enforcing individual fairness to a supervised learning problem by minimizing the number of violations. Instead of a similarity metric, these algorithms rely on an oracle that detects violations of individual fairness. \citet{garg2018Counterfactual} enforce individual fairness by penalizing the expected difference in predictions between counterfactual inputs. Instead of a similarity metric, this algorithm relies a way of generating counterfactual inputs. Our approach complements these methods by relying on a similarity metric instead of such oracles. This allows us to take advantage of recent work on learning fair metrics \citep{ilvento2019Metric,wang2019Empirical,yurochkin2020Training,mukherjee2020Two}.

Most similar to our work is SenSR \citep{yurochkin2020Training} that also assumes access to a similarity metric. SenSR is based on adversarial training, i.e.\ it enforces a risk-based notion of individual fairness that only requires the risk of the ML model to be similar on similar inputs. In contrast, our method is based on fair regularization, i.e.\ it enforces a notion of individual fairness that requires the outputs of the ML model to be similar. The latter is stronger (it implies the former) and is much closer to \citeauthor{dwork2011Fairness}'s original definition. In addition, the risk-based notion of SenSR ties accuracy to fairness. Our approach separates these two (usually conflicting) goals and allows practitioners to more easily adjust the trade-off between accuracy and fairness as demonstrated in our experimental studies.

Finally, there is a line of work that proposes pre-processing techniques to learn a fair representation of the individuals and training  an ML model that accepts the fair representation as input \citep{zemel2013Learning,bower2018Debiasing,madras2018Learning,lahoti2019iFair}. These works are complimentary to ours as we propose an in-processing algorithm.

\section{Theoretical properties of SenSeI}
\label{sec:theory}

In this section, we describe some theoretical properties of SenSeI. We defer all proofs to Appendix \ref{sec:proofs}. First, Algorithm \ref{alg:sensei} is an instance of a stochastic gradient method, and its convergence properties are well-studied. Even if $f(w,Z)$ is non-convex in $w$, the algorithm converges (globally) to a stationary point (see Appendix \ref{sec:proofs} for a rigorous statement). Second, the fair regularizer is data-dependent, and it is unclear whether minimizing its empirical counterpart \eqref{eq:empirical-fair-regularizer} enforces distributional fairness. We show that the fair regularizer generalizes under standard conditions. Consequently,

\begin{enumerate}
\item it is possible for practitioners to \emph{certify} that an ML model $h$ is DIF \emph{a posteriori} by checking $\hat{R}(h)$ (even if $h$ was trained by minimizing $\hR$);
\item fair regularization enforces distributional individual fairness (as long as the hypothesis class includes DIF ML models);
\end{enumerate}

\paragraph{Notation} Let $\{(X_i,Y_i)\}_{i=1}^n$ be the training set and $\hP_X$ be the empirical distribution of the inputs. Define $\hL:\cH\to\reals$ as the empirical risk and $\hR:\cH\to\reals$ as the empirical counterpart of the fair regularizer $R$ \eqref{eq:fair-regularizer}:
\begin{equation}
\hR(h) \triangleq \left\{\,\begin{aligned}
& \max\nolimits_{\Pi\in\Delta(\cX\times\cX)} & & \Ex_\Pi\big[d_{\cY}(h(X),h(X'))\big] \\
& \st      & & \Ex_\Pi\big[d_{\cX}(X,X')\big] \le \eps \\
&          & & \Pi(\cdot,\cX) = \hP_X,
\end{aligned}\,\right\}.
\label{eq:empirical-fair-regularizer}
\end{equation}
Define the loss class $\cL$ and its counterpart for the fair regularizer as
\[
\begin{aligned}
\cL \triangleq \{\ell_h:\cZ\to\reals\mid h\in\cH\},\quad\ell_h(z) \triangleq \ell(h(x),y),\\
\cD \triangleq \{d_h:\cX\times\cX\to\reals_+\mid h\in\cH\},\quad d_h(x,x') \triangleq d_{\cY}(h(x),h(x')).
\end{aligned}
\]
We measure the complexity of $\cD$ and $\cL$ with their entropy integrals with respect to to the uniform metric:
$J(\cD/\cL) \triangleq \int_0^\infty\log N(\cD/\cL,\|\cdot\|_\infty,\eps)^{\frac12}d\eps$,
where $N(\cD,\|\cdot\|_\infty,\eps)$ is the $\eps$-covering number of $\cD$ in the uniform metric. The main benefit of using entropy integrals instead of Rademacher or Gaussian complexities to measure the complexity of $\cD$ and $\cL$ is it does not depend on the distribution of the training examples. This allows us to obtain generalization error bounds for counterfactual training sets that are similar to the (observed) training set. Finally, define the diameter of $\cX$ in the $d_{\cX}$ metric, that of $\cY$ in the $d_{\cY}$ metric as
$D_{\cX} \triangleq \sup\nolimits_{x,x'\in\cX}d_{\cX}(x,x'),\, D_{\cY} \triangleq \sup\nolimits_{y,y'\in\cY}d_{\cY}(y,y').$
The first result shows that the fair regularizer $R(h)$ generalizes. We assume $D_{\cX}$, $D_{cY} < \infty$. This is a boundedness condition on $\cX\times\cY$, and it is a common simplifying assumption in statistical learning theory. We also assume $J(\cD) < \infty$. This is a standard assumption that appears in many uniform convergence results.

\begin{theorem}
\label{thm:fair-regularizer-generalization-error}
As long as $D_{\cX}$, $D_{\cY}$, and $J(\cG)$ are all finite, with probability at least $1-t$:
\[\textstyle
\sup\nolimits_{h\in\cH}|\hR(h) - R(h)| \le \frac{48(J(\cD) + \frac1\eps D_{\cX}D_{\cY})}{\sqrt{n}} +  D_{\cY}(\frac{\log\frac2t}{2n})^{\frac12}.
\]
\end{theorem}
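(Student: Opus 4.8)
The plan is to pass to the dual form of the regularizer, reduce the uniform deviation to a single empirical process over an enlarged function class, and control that process by chaining. \emph{Step 1 (dualize and reduce to an empirical process).} Theorem~\ref{thm:fair-regularizer-dual-form} applies verbatim with $P_X$ replaced by the empirical measure $\hP_X$ (it is just another probability measure on $\cX$ and the continuity hypothesis is unchanged), so $R(h) = \inf_{\lambda\ge 0}\{\lambda\eps + \Ex_{P_X}[r_\lambda(h,X)]\}$ and $\hR(h) = \inf_{\lambda\ge 0}\{\lambda\eps + \Ex_{\hP_X}[r_\lambda(h,X)]\}$. Since $|\inf_\lambda a(\lambda) - \inf_\lambda b(\lambda)| \le \sup_\lambda|a(\lambda) - b(\lambda)|$ and the $\lambda\eps$ terms cancel,
\[
\sup\nolimits_{h\in\cH}|\hR(h) - R(h)| \le \sup\nolimits_{h\in\cH}\sup\nolimits_{\lambda\ge 0}\big|(\Ex_{\hP_X} - \Ex_{P_X})[r_\lambda(h,\cdot)]\big|.
\]
Taking $x'=X$ in the definition of $r_\lambda$ gives $0 \le r_\lambda(h,X) \le D_{\cY}$, and $R(h) \le \Ex_{P_X}[r_0(h,X)] \le D_{\cY}$, so for $\lambda > D_{\cY}/\eps$ the dual objective already exceeds $D_{\cY}$; hence the infima may be restricted to $\Lambda := [0,D_{\cY}/\eps]$, and it suffices to bound the empirical process indexed by $\cG := \{x\mapsto r_\lambda(h,x) : h\in\cH,\ \lambda\in\Lambda\} \subseteq \{g:\cX\to[0,D_{\cY}]\}$.

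\emph{Step 2 (covering numbers of $\cG$).} The key structural fact is that the inner $\sup_{x'}$ is non-expansive in the uniform norm and $r_\lambda$ is $D_{\cX}$-Lipschitz in $\lambda$: if $\|d_h - d_{h'}\|_\infty \le \alpha$ and $|\lambda - \lambda'| \le \beta$ then, for every $x$,
\[
\big|r_\lambda(h,x) - r_{\lambda'}(h',x)\big| \le \sup\nolimits_{x'\in\cX}\big|(d_h(x,x') - \lambda d_\cX(x,x')) - (d_{h'}(x,x') - \lambda' d_\cX(x,x'))\big| \le \alpha + \beta D_\cX,
\]
using $|\sup f - \sup g|\le\sup|f-g|$, $d_\cX\le D_\cX$, and $d_h(x,x')=d_\cY(h(x),h(x'))$. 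Thus an $(\alpha,\beta)$-cover of $\cD\times\Lambda$ induces an $(\alpha+\beta D_\cX)$-cover of $\cG$ in $\|\cdot\|_\infty$; choosing $\alpha=\beta D_\cX=u/2$ and using that $\Lambda$ has length $D_\cY/\eps$,
\[
\log N(\cG,\|\cdot\|_\infty,u) \le \log N(\cD,\|\cdot\|_\infty,u/2) + \log\max\{1,\ 2D_\cX D_\cY/(\eps u)\}.
\]
Integrating with $\sqrt{a+b}\le\sqrt a+\sqrt b$ and $v=u/2$ gives $J(\cG) \le 2J(\cD) + \tfrac{D_\cX D_\cY}{\eps}\int_0^1\sqrt{\log(2/s)}\,ds \le 2J(\cD) + c_0\tfrac{D_\cX D_\cY}{\eps}$ with $c_0 = \int_0^1\sqrt{\log(2/s)}\,ds < 2$ an absolute constant; in particular finiteness of $D_\cX$, $D_\cY$, $J(\cD)$ forces $J(\cG)<\infty$.

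\emph{Step 3 (chaining and concentration).} Standard symmetrization followed by Dudley's entropy integral bound (using $N(\cG,L_2(\hP_X),u)\le N(\cG,\|\cdot\|_\infty,u)$) yields $\Ex\sup_{g\in\cG}|(\hP_X-P_X)g| \le \tfrac{24}{\sqrt n}J(\cG)$; combined with Step 2 this is $\le \tfrac{48 J(\cD) + 24c_0 D_\cX D_\cY/\eps}{\sqrt n} \le \tfrac{48(J(\cD)+D_\cX D_\cY/\eps)}{\sqrt n}$. Finally, $g\mapsto\frac1n\sum_i g(X_i)$ has bounded differences $D_\cY/n$, hence so does $\sup_{g\in\cG}|(\hP_X-P_X)g|$, and McDiarmid's inequality (applied to the two one-sided suprema with a union bound, which produces the $\log\frac2t$) upgrades the expectation bound to the stated high-probability bound.

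\emph{Main obstacle.} Symmetrization, Dudley and McDiarmid are routine; the step that needs care is Step~2 — showing the inner maximization over $x'$ and the dual variable $\lambda$ do not inflate the metric entropy beyond $J(\cD)$ plus the advertised $D_\cX D_\cY/\eps$ term — together with the reduction in Step~1 that legitimately confines $\lambda$ to a bounded interval. Pinning the constant to $48$ also hinges on the split $\alpha=\beta D_\cX$ and the bound $c_0<2$.
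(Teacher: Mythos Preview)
Your proposal is correct and follows essentially the same route as the paper: dualize via Theorem~\ref{thm:fair-regularizer-dual-form}, confine $\lambda$ to $[0,D_\cY/\eps]$ using $r_\lambda\ge 0$, reduce to a uniform deviation over the class $\{r_\lambda(h,\cdot):h\in\cH,\lambda\in[0,D_\cY/\eps]\}$, bound its covering numbers by the product of a $\cD$-cover and an interval cover, and finish with symmetrization, Dudley, and bounded differences. The only cosmetic differences are that the paper selects optimizers $\lambda_*,\hlambda_*$ rather than invoking $|\inf a-\inf b|\le\sup|a-b|$, and it phrases the covering-number step as sub-Gaussianity of the Rademacher process with respect to a product metric $d_\cF$ instead of your direct $\|\cdot\|_\infty$ Lipschitz argument; the resulting bounds and constants coincide.
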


Theorem \ref{thm:fair-regularizer-generalization-error} implies it is possible to certify that an ML model $h$ satisfies distributional individual fairness (modulo error terms that vanish in the large-sample limit) by inspecting $\hR(h)$. This is important because a practitioner may inspect $\hR(h)$ after training to verify whether the trained ML model $h$ is fair enough. Theorem \ref{thm:fair-regularizer-generalization-error} assures the user that $\hR(h)$ is close to $R(h)$.



\section{Computational experiments}
\label{sec:exps}
In this section we present empirical evidence that SenSeI trains individually fair ML models in practice and study the trade-off between accuracy and fairness parametrized by $\rho$ (defined in \eqref{eq:fair-regularization}).

\paragraph{Baselines} We compare SenSeI to empirical risk minimization (Baseline) and two recent approaches for training individually fair ML models: Sensitive Subspace Robustness (SenSR) \citep{yurochkin2020Training} that uses DRO to achieve robustness to perturbations in a fair metric, and Counterfactual Logit Pairing (CLP) \citep{garg2018Counterfactual} that penalizes the differences in the output of an ML model on training examples and hand-crafted counterfactuals. We provide implementation details of SenSeI and the baselines in Appendix \ref{supp:implementation}.

\subsection{Toxic comment detection}
We consider the task of training a classifier to identify toxic comments, \ie\ rude or disrespectful messages in online conversations. Identifying and moderating toxic comments is crucial for facilitating inclusive online conversations. Data is available through the ``Toxic Comment Classification Challenge'' Kaggle competition. We utilize the subset of the dataset that is labeled with a range of identity contexts (\eg\ ``muslim'', ``white'', ``black'', ``homosexual gay or lesbian''). Many of the toxic comments in the train data also relate to these identities leading to a classifier with poor test performance on the sets of comments with some of the identity contexts (group fairness violation) and prediction rule utilizing words such as ``gay'' to flag a comment as toxic (individual fairness violation). To obtain good features we use last layer representation of BERT (base, uncased) \citep{devlin2018BERT} fine tuned on a separate subset of 500k randomly selected comments without identity labels. We then train a 2000 hidden units neural network with these BERT features.

\textbf{Counterfactuals and fair metric.}
CLP \citep{garg2018Counterfactual} requires defining a set of counterfactual tokens. The training proceeds by taking an input comment and if it contains a counterfactual token replacing it with another random counterfactual token. For example, if ``gay'' and ``straight'' are among the counterfactual tokens, comment ``Some people are gay'' may be modified to ``Some people are straight''. Then difference in logit outputs of the classifier on the original and modified comments is used as a regularizer. For toxicity classification \citet{garg2018Counterfactual} adopted a set of 50 counterfactual tokens from \citep{dixon2018Measuring}. Counterfactuals allow for a simple fair metric learning procedure via factor analysis: since any variation in representation of a data point and its counterfactuals is considered undesired, \citet{yurochkin2020Training,mukherjee2020Two} proposed to use a Mahalanobis metric with the major directions of variation among counterfactuals projected out. We utilize this approach here to obtain fair metric for training SenSR \citep{yurochkin2020Training} and SenSeI. See Appendix \ref{sup:fair-metric} for additional details regarding the fair metric construction in the experiments.

\textbf{Comparison metrics.} To evaluate individual fairness of the classifiers we use test data and 50 counterfactuals to check if the classifier toxicity decision varies across counterfactuals. For example, is prediction for ``Some people are gay'' same as for ``Some people are straight''? An intuitive fair metric should be 0 on a pair of such comments and prediction of the classifier should not change based on the \citeauthor{dwork2011Fairness}'s individual fairness definition. We report Counterfactual Token Fairness (CTF) score \citep{garg2018Counterfactual} that quantifies variance across counterfactuals of the predicted probability that a comment is toxic, and Prediction Consistency (PC) equal to the portion of test comments where prediction is the same across \emph{all} 50 counterfactual variations.

For goodness-of-fit we use balanced accuracy due to class imbalance. To quantify group fairness we follow accuracy parity notion \citep{zafar2017Fairness,zhao2019Conditional}. Here protected groups correspond to the identity context labels available in the data and accuracy parity quantifies whether a classifier is equally accurate on, e.g., comments labeled to have ``white'' context and those labeled with ``black''. There are 9 protected groups and we report standard deviation of the accuracies and balanced accuracies across them. We present mathematical expressions for each of the metrics in Appendix \ref{supp:metrics} for completeness.

\textbf{Results.} We repeat our experiment 10 times with random 70-30 train-test splits, every time utilizing a random subset of 25 counterfactuals during training. Results are summarized in Table \ref{table:toxicity}. SenSeI on average outperforms other fair training methods on all individual and group fairness metrics at the cost of slightly lower balanced accuracy. SenSR has the lowest prediction consistency score suggesting that our fair regularization is more effective in enforcing individual fairness than adversarial training. This observation aligns with the empirical study by \citet{yang2019Invarianceinducing} comparing various invariance enforcing techniques for spatial robustness in image recognition.

SenSeI also outperforms CLP: our approach uses optimization to find worst case perturbations of the data according to a fair metric induced by counterfactuals, while CLP chooses a random perturbation among the counterfactuals. SenSeI is searching for a perturbation on every data point that potentially allows to generalize to unseen counterfactuals, while CLP can only perturb comments that explicitly contain a counterfactual known during training time. In Figure \ref{fig:bios_toxicity_tradeoff} we verify that both SenSeI and CLP allow to ``trade'' fairness and accuracy by varying the regularization strength $\rho$ (in the table we used $\rho=5$ for both). We also notice the effect of worst case optimization opposed to random sampling: SenSeI has higher prediction consistency at the cost of balanced accuracy.

\begin{table*}[t]
\caption{Summary of \textbf{Toxicity} classification experiment over 10 restarts}
\vskip -0.05in
\label{table:toxicity}
\begin{center}
\begin{tabular}{lccccc}
\toprule
{} &         BA &                 CTF score &    PC &                    BA STD &                   ACC STD \\
\midrule
Baseline &  \textbf{0.807}$\pm$0.002 &           0.072$\pm$0.002 &           0.621$\pm$0.014 &           0.044$\pm$0.003 &           0.081$\pm$0.004 \\
SenSeI   &           0.791$\pm$0.005 &  \textbf{0.029}$\pm$0.005 &  \textbf{0.773}$\pm$0.043 &  \textbf{0.035}$\pm$0.003 &  \textbf{0.052}$\pm$0.003 \\
SenSR    &           0.794$\pm$0.003 &           0.043$\pm$0.005 &           0.729$\pm$0.044 &           0.036$\pm$0.002 &           0.059$\pm$0.004 \\
CLP      &           0.795$\pm$0.006 &           0.032$\pm$0.006 &           0.763$\pm$0.048 &           0.038$\pm$0.003 &           0.056$\pm$0.004 \\
\bottomrule
\end{tabular}
\end{center}
\vskip -0.2in
\end{table*}

\begin{table*}[t]
\vskip -0.05in
\caption{Summary of \textbf{Bios} classification experiment over 10 restarts}
\vskip -0.05in
\label{table:bios}
\begin{center}
\begin{tabular}{lccccc}
\toprule
{} &         BA &                 CTF score &    PC &                   Gap RMS &                   Gap ABS \\
\midrule
Baseline &           0.842$\pm$0.002 &           0.028$\pm$0.001 &           0.942$\pm$0.001 &           0.120$\pm$0.004 &      0.080$\pm$0.003 \\
SenSeI   &  \textbf{0.843}$\pm$0.003 &  \textbf{0.003}$\pm$0.000 &  \textbf{0.977}$\pm$0.001 &  \textbf{0.086}$\pm$0.005 &      \textbf{0.054}$\pm$0.003 \\
SenSR    &           0.842$\pm$0.003 &           0.004$\pm$0.000 &           0.976$\pm$0.001 &           0.087$\pm$0.004 &      \textbf{0.054}$\pm$0.003 \\
CLP      &           0.841$\pm$0.003 &           0.005$\pm$0.000 &           0.974$\pm$0.001 &           0.087$\pm$0.005 &      0.056$\pm$0.004 \\
\bottomrule
\end{tabular}
\end{center}
\vskip -0.2in
\end{table*}

\begin{figure*}[t!]
\begin{center}
\subfigure[Toxicity PC ]{\includegraphics[width=0.24\textwidth]{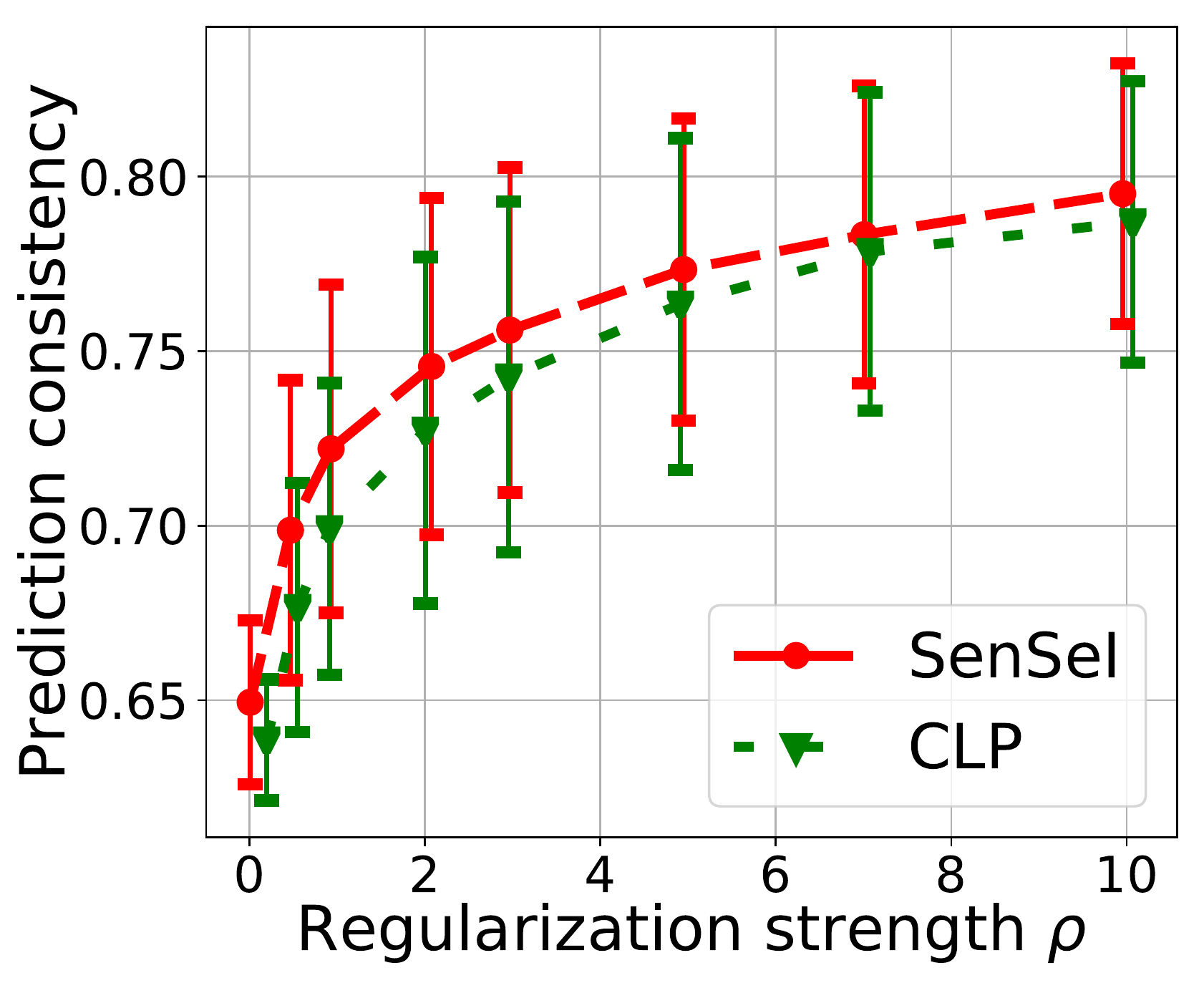}}
\subfigure[Toxicity BA]{\includegraphics[width=0.24\textwidth]{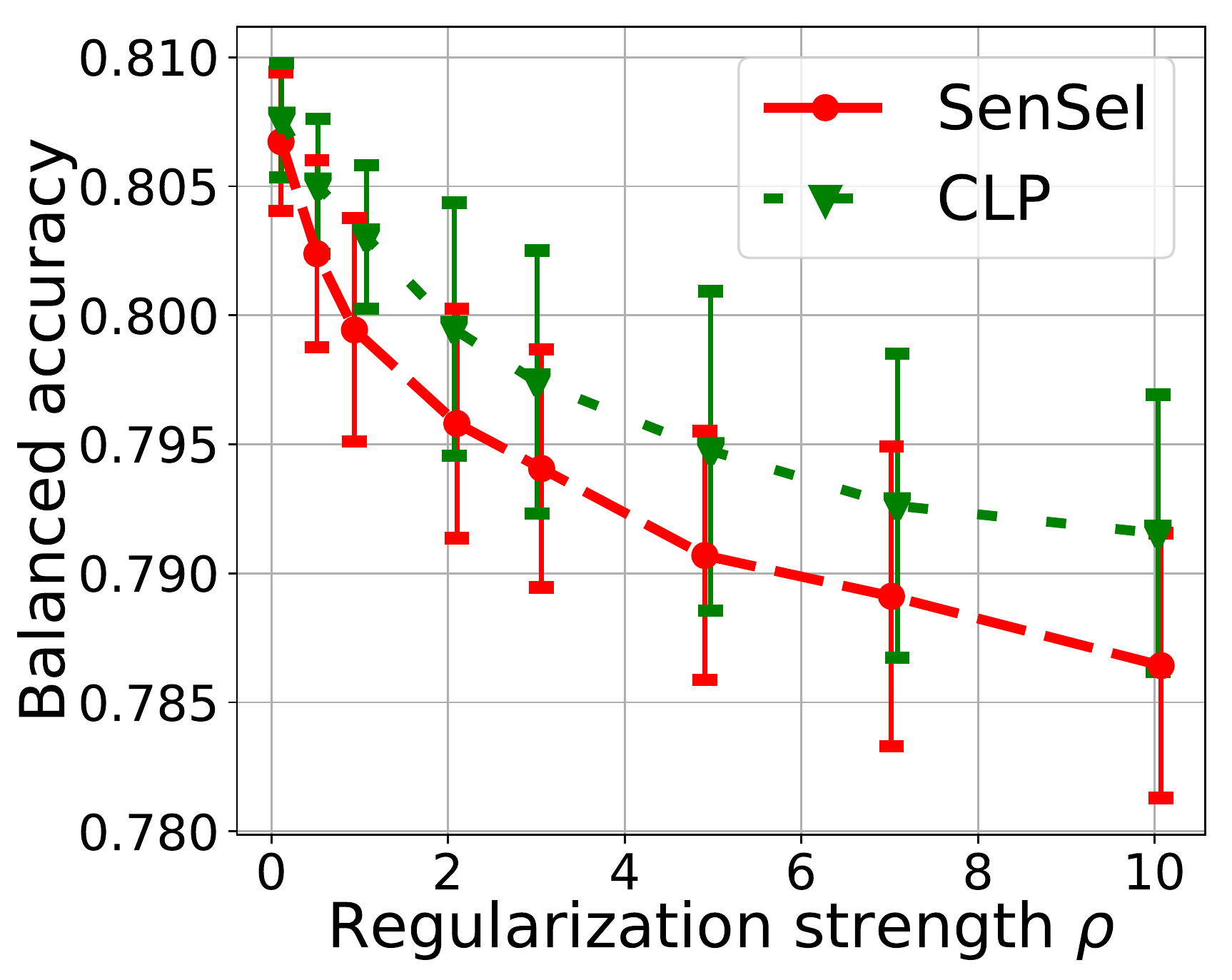}}
\subfigure[Bios PC]{\includegraphics[width=0.24\textwidth]{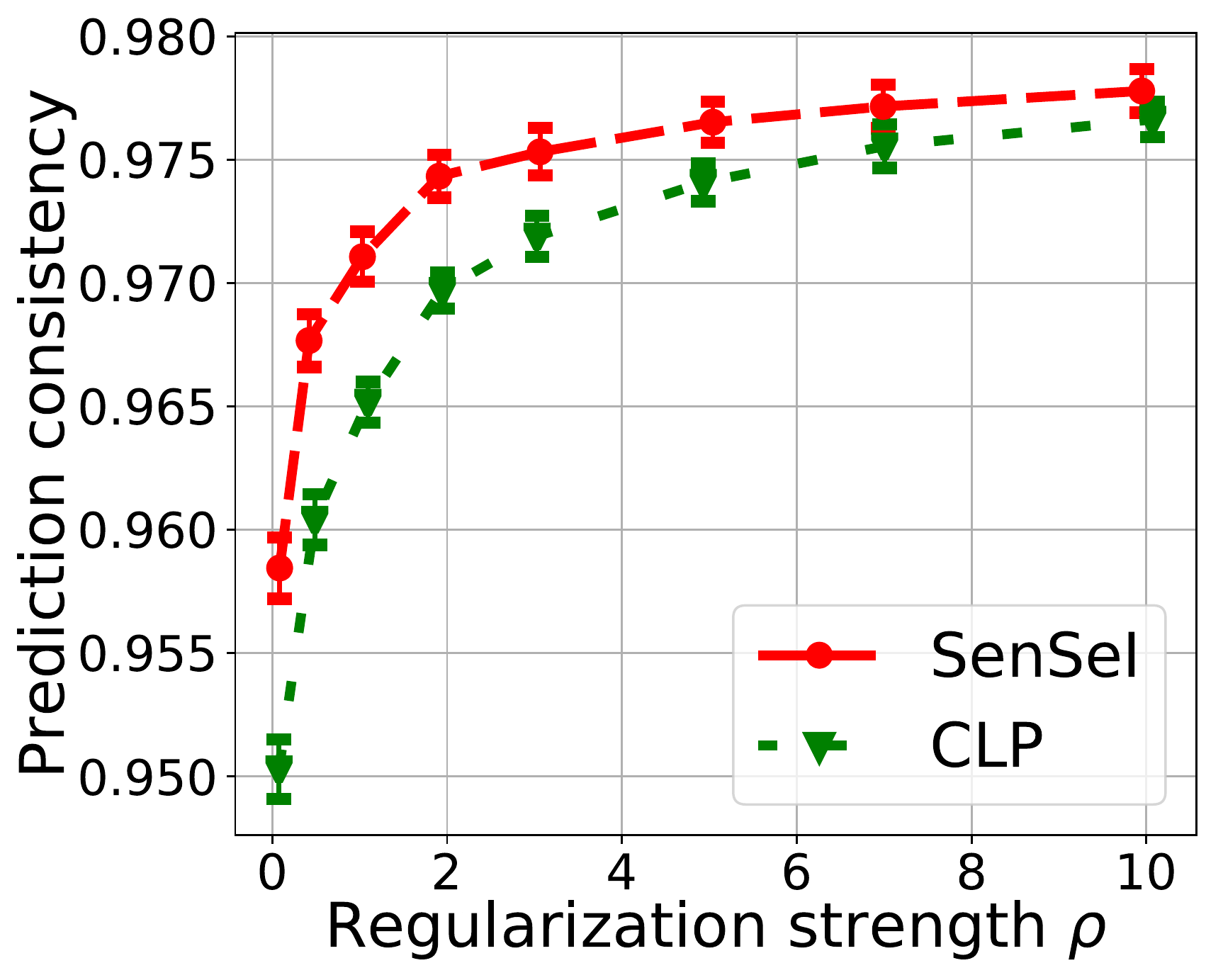}}
\subfigure[Bios BA]{\includegraphics[width=0.24\textwidth]{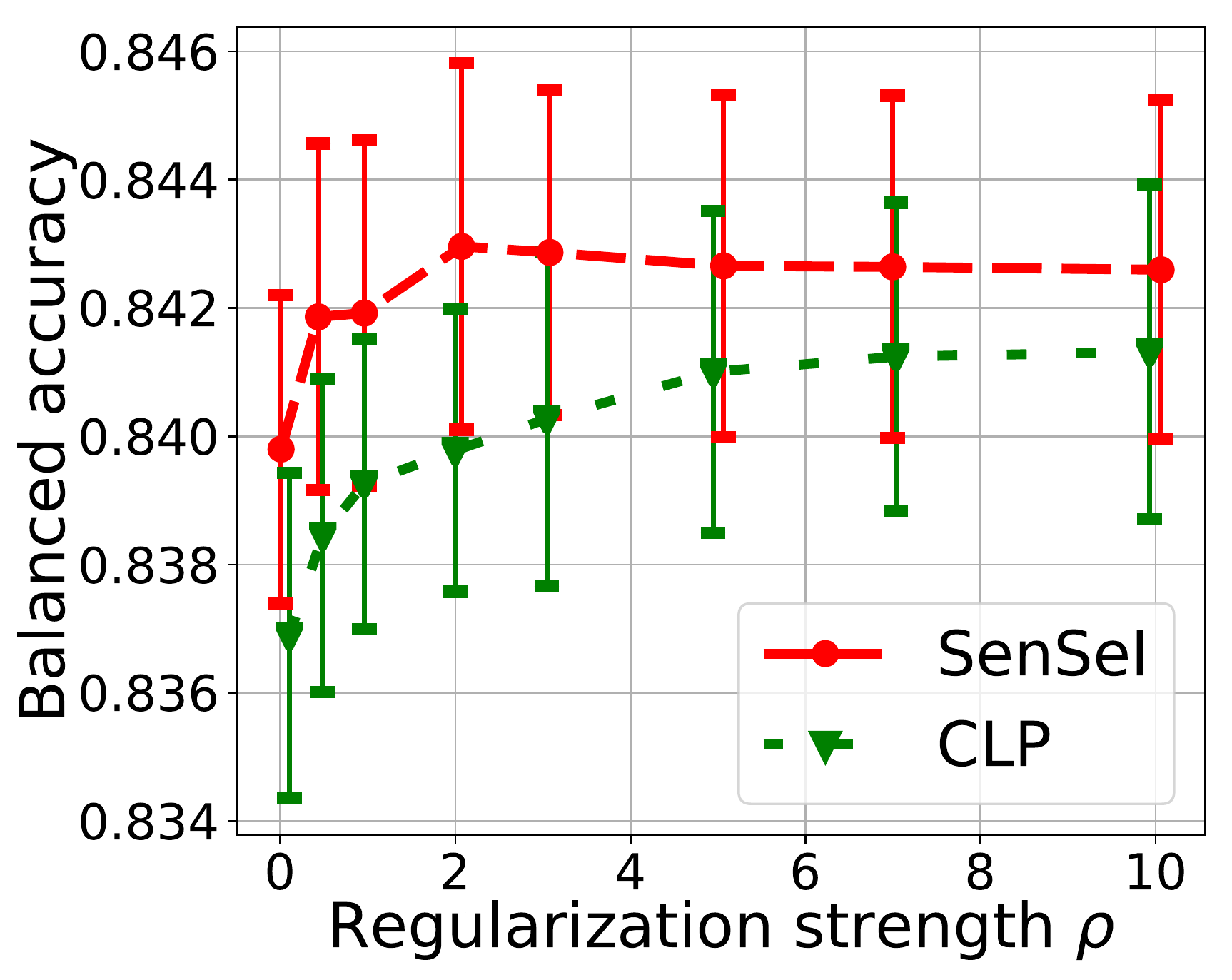}}
\vskip -0.1in
\caption{Balanced accuracy (BA) and prediction consistency (PC) trade-off.}
\label{fig:bios_toxicity_tradeoff}
\end{center}
\vskip -0.2in
\end{figure*}

\subsection{Occupation prediction}
Online professional presence via dedicated services or personal websites is the common practice in many industries. ML systems can be trained using data from these sources to identify person's occupation and used by recruiters to assist in finding suitable candidates for the job openings. Gender imbalances in occupations may trigger biases in such ML systems exacerbating societal inequality. \citet{de-arteaga2019Bias} proposed Bias in Bios dataset to study fairness in occupation prediction from a person's bio. The dataset consists of 400k textual bio descriptions and the goal is to predict one of the 28 occupations. We again use BERT fine-tuned on the train data to obtain bio representations and then train a 2000 hidden neurons neural networks using each of the fair training methods.

\textbf{Counterfactuals and fair metric.} Counterfactuals definition for this problem is the gender analog of the \citet{bertrand2004Are} investigation of the racial bias in the labor market. For each bio we create a counterfactual bio by replacing male pronouns with the corresponding female ones and vice a versa. For the fair metric we use same approach as in the Toxicity study.

\textbf{Comparison metrics.} We use the same individual fairness metrics. To compare group fairness we report root mean squared gap (Gap RMS) and mean absolute gap (Gap ABS) between male and female true positive rates for each of the occupations following prior studies of this dataset \citep{romanov2019What, prost2019Debiasing}. For performance we report balanced accuracy due to imbalance in occupation proportions in the data.

\textbf{Results.} We repeat the experiment 10 times with 70-30 train-test splits and summarize results in Table \ref{table:bios} (for SenSeI and CLP we set $\rho=5$). Comparing to the Toxicity experiment, we note that individual fairness metrics are much better. In particular, attaining prediction consistency is easier because there is only one type of counterfactuals. Overall, fairness metrics are comparable across fair training methods with a slight SenSeI advantage. It is interesting to note mild accuracy improvement: learning a classifier invariant to gender can help to avoid spurious correlations between occupations and gender present in the data. We present fairness accuracy ``trade-off'' in Figure \ref{fig:bios_toxicity_tradeoff} --- increasing regularization strength has clear upward trend in terms of the prediction consistency without decreasing accuracy. This experiments is an example where fairness can be improved without ''trading'' performance.

We note two prior group fairness studies of the Bios dataset: \citet{romanov2019What} and \citet{prost2019Debiasing}. Both reported worse classification results and fairness metrics. Better classification performance in our work is likely attributed to using BERT for obtaining bios feature vectors. For the group fairness, we note that relative to baseline improvement with SenSeI is more significant.

\begin{table*}[t]
\caption{\textbf{Adult} experiment over 10 restarts. Prior methods are duplicated from \cite{yurochkin2020Training}}
\vskip -0.05in
\label{table:adult}
\begin{center}
\begin{tabular}{lcccccccc}
\toprule
{} &        BA,\% &   $\text{S-Con.}$ &  $\text{GR-Con.}$ &      $\mathrm{Gap}_G^{\mathrm{RMS}}$ &       $\mathrm{Gap}_R^{\mathrm{RMS}}$ &     $\mathrm{Gap}_G^{\mathrm{max}}$ & $\mathrm{Gap}_R^{\mathrm{max}}$ \\
\midrule
SenSR &  78.9 & .934 & .984 & .068 & .055 & .087 & .067 \\ 
Baseline &  \textbf{82.9} & .848 & .865 & .179 & .089 & .216 & .105 \\
Project & 82.7 & .868 & \textbf{1.00} & .145 & .064 & .192 & .086 \\
Adv. debiasing &  81.5 & .807 & .841 & .082 & .070 & .110 & .078  \\
CoCL &  79.0 & - & - &  .163 & .080 & .201  & .109  \\
SenSeI $(\rho=40)$ & 76.8 & \textbf{.945} & .963 & \textbf{.043} & \textbf{.054} & \textbf{.053} & \textbf{.064} \\
\bottomrule
\end{tabular}
\end{center}
\vskip -0.2in
\end{table*}

\subsection{Income prediction}
The Adult dataset \citep{bache2013UCI} is a common benchmark in the group fairness literature. The task is to predict if a person earns more than \$50k per year using information about their education, gender, race, marital status, hours worked per week, etc. \citet{yurochkin2020Training} studied individual fairness on Adult by considering prediction consistency with respect to demographic features: race and gender (GR-Con.) and marital status (S-Con., i.e. spouse consistency). To quantify group fairness they used RMS gaps and maximum gaps between true positive rates across genders ($\mathrm{Gap}_G^{\mathrm{RMS}}$ and $\mathrm{Gap}_G^{\mathrm{max}}$) and races ($\mathrm{Gap}_R^{\mathrm{RMS}}$ and $\mathrm{Gap}_R^{\mathrm{max}}$). Due to class imbalance, performance is quantified with balanced accuracy (B-Acc). For the fair metric they used Mahalanobis distance with race, gender and a logistic regression vector predicting gender projected out. We note that CLP was proposed as a fair training method for text classification \citep{garg2018Counterfactual} and is not applicable on Adult because it is not clear how to define counterfactuals. We compare SenSeI to results reported in \citet{yurochkin2020Training}. In Table \ref{table:adult} we show that with sufficiently large regularization strength $\rho=40$ SenSeI is able to further reduce all group fairness gaps and improve one of the individual fairness metrics, however trading off some accuracy.


\section{Summary and discussion}

In this paper, we studied a regularization approach to enforcing individual fairness. We defined distributional individual fairness, a variant of \citeauthor{dwork2011Fairness}'s original definition of individual fairness and a data-dependent regularizer that enforces this distributional fairness (see Definition \ref{def:distributional-individual-fairness}). We also developed a stochastic approximation algorithm to solve regularized empirical risk minimization problems and showed that it trains ML models with distributional fairness guarantees. Finally, we showed that the algorithm mitigates algorithmic bias on three ML tasks that are susceptible to such biases: income-level classification, occupation prediction, and toxic comment detection.

\section*{Acknowledgements} This paper is based upon work supported by the National Science Foundation (NSF) under grants no. 1830247 and 1916271. Any opinions, findings, and conclusions or recommendations expressed in this paper are those of the authors and do not necessarily reflect the views of the NSF.

\bibliographystyle{iclr2021_conference}
\bibliography{YK}

\newif\ifincludeSupplement
\includeSupplementtrue 
\ifincludeSupplement

\appendix
\newpage
\onecolumn
\section{Theoretical properties}
\label{sec:proofs}

We collect the proofs of all the theoretical results in the paper here. We restate the results before proving them for the reader's convenience. 
We assume that $(\cX,d_{\cX})$ and $(\cY,d_{\cY})$ are complete and separable metric spaces (Polish spaces). 

\subsection{DIF and individual fairness}

\begin{proposition}[Proposition \ref{prop:DIF-implies-eps-delta-IF}]
If $h:\cX\to\cY$ is $(\eps,\delta)$-DIF, then
\[\textstyle
P_X(\sup\nolimits_{d_{\cX}(x,x')\le\eps}d_{\cY}(h(x),h(x')) \ge \tau) \le \frac{\delta}{\tau}.
\]
\end{proposition}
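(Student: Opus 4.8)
The plan is to read the left-hand side as a Markov bound. Write
$g(x) \triangleq \sup_{d_{\cX}(x,x')\le\eps}d_{\cY}(h(x),h(x')) = d_{\cY}(h(x),h(T_\IF(x)))$,
a nonnegative function of $x$. I would first show that $\Ex_{P_X}[g(X)] \le R(h)$ — i.e.\ that the worst-case individual-fairness map $T_\IF$ furnishes a feasible (though generally suboptimal) competitor in the optimization problem \eqref{eq:fair-regularizer} defining $R(h)$ — and then combine this with $R(h)\le\delta$ (the definition of $(\eps,\delta)$-DIF) and Markov's inequality applied to $g(X)$.

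The only step with real content is $\Ex_{P_X}[g(X)] \le R(h)$. Since $(\cX,d_{\cX})$ and $(\cY,d_{\cY})$ are Polish and $(x,x')\mapsto d_{\cY}(h(x),h(x'))-\lambda d_{\cX}(x,x')$ is continuous (the standing hypothesis of Theorem \ref{thm:fair-regularizer-dual-form}, which I may invoke), the correspondence $\Gamma(x)\triangleq\{x':d_{\cX}(x,x')\le\eps\}$ is closed-valued and measurable, so by the measurable maximum / Kuratowski--Ryll-Nardzewski selection theorem $g$ is measurable and, for each $\eta>0$, there is a Borel map $T_\eta:\cX\to\cX$ with $d_{\cX}(x,T_\eta(x))\le\eps$ and $d_{\cY}(h(x),h(T_\eta(x)))\ge g(x)-\eta$ for all $x$. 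Let $\Pi_\eta$ be the image of $P_X$ under $x\mapsto(x,T_\eta(x))$. Then $\Pi_\eta$ has first marginal $P_X$ and $\Ex_{\Pi_\eta}[d_{\cX}(X,X')]=\Ex_{P_X}[d_{\cX}(X,T_\eta(X))]\le\eps$, so $\Pi_\eta$ is feasible in \eqref{eq:fair-regularizer}; hence
\[
\Ex_{P_X}[g(X)] - \eta \le \Ex_{P_X}\big[d_{\cY}(h(X),h(T_\eta(X)))\big] = \Ex_{\Pi_\eta}\big[d_{\cY}(h(X),h(X'))\big] \le R(h).
\]
Letting $\eta\downarrow 0$ gives $\Ex_{P_X}[g(X)]\le R(h)$. (If $\Ex_{P_X}[g(X)]=\infty$, apply the argument to $\min(g,M)$ using a selection achieving $\min(g(x),M)-\eta$ and let $M\to\infty$ by monotone convergence.)

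Finally, from $R(h)\le\delta$ I get $\Ex_{P_X}[g(X)]\le\delta$, and Markov's inequality gives
$P_X\big(\sup_{d_{\cX}(X,x')\le\eps}d_{\cY}(h(X),h(x'))\ge\tau\big)=P_X(g(X)\ge\tau)\le\Ex_{P_X}[g(X)]/\tau\le\delta/\tau$,
which is the claim. The main obstacle is the measurability and existence of the near-optimal selection $T_\eta$; everything else is immediate. If one prefers to avoid selection theorems altogether, the same conclusion follows by defining $T_\eta$ only on the level set $\{g\ge\tau\}$ (and $T_\eta=\mathrm{id}$ elsewhere), which directly yields $(\tau-\eta)\,P_X(g(X)\ge\tau)\le \Ex_{\Pi_\eta}[d_{\cY}(h(X),h(X'))]\le R(h)\le\delta$ and hence $P_X(g(X)\ge\tau)\le\delta/\tau$ upon $\eta\downarrow 0$.
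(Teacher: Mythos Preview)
Your proposal is correct and follows essentially the same approach as the paper: show that the worst-case map $T_\IF$ (or a near-optimal measurable selection thereof) is feasible for the optimization \eqref{eq:fair-regularizer}, deduce $\Ex_{P_X}[g(X)]\le R(h)\le\delta$, and apply Markov's inequality. The paper's proof is terser and simply asserts feasibility of $T_\IF$ without discussing measurability or existence of the argmax; your treatment via $\eta$-approximate measurable selections is more careful on exactly this point, but the underlying idea is identical.
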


\begin{proof}
Recall
\[
T_\IF \triangleq \argmax_{d_{\cX}(x,x')\le \eps}d_{\cY}(h(x),h(x')).
\]
is feasible for (the Mong\'{e} version of) \eqref{eq:fair-regularizer}. Thus $R(h) \le \delta$ implies $\Ex_{P_X}\big[d_{\cY}(X,T_\IF(X))\big] \le \delta$. Markov's inequality implies $P_X(d_{\cY}(X,T_\IF(X)) \ge \tau) \le \frac{\delta}{\tau}\text{ for any }\tau > 0$.
\end{proof}

\subsection{Convergence properties of SenSeI}
Algorithm \ref{alg:sensei} is an instance of a stochastic gradient method, and its convergence properties are well-studied. Even if $f(w,Z)$ is non-convex in $w$, the algorithm converges (globally) to a stationary point. This a well-known result in stochastic approximation, and we state it here for completeness. 

\begin{theorem}[\citet{ghadimi2013Stochastic}]
Let 
\[\textstyle
\sigma^2 \ge \Ex\big[\|\frac1B\sum_{b=1}^B\partial_wf(w,Z_b) - \partial F(w)\|_2^2\big]
\]
be an upper bound of the variance of the stochastic gradient. As long as $F$ is $L$-strongly smooth,  
\[\textstyle
F(w') \le F(w) + \langle\partial F(w),w'-w\rangle + \frac{L}{2}\|w-w'\|_2^2
\]
for any $w,w'\in\Theta\times\reals_+$, 
then Algorithm \ref{alg:sensei} with constant step sizes $\eta_t = (\frac{2B\eps_0}{L\sigma^2T})^{\frac12}$ satisfies
\[\textstyle
\frac1T\sum_{t=1}^T\Ex\big[\|\partial F(w_t)\|_2^2\big] \le \sigma(\frac{8L\eps_0}{BT}),
\]
where $\eps_0$ is any upper bound of the suboptimality of $w_0$.
\end{theorem}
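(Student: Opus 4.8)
The plan is to recognize Algorithm~\ref{alg:sensei} as (projected) stochastic gradient descent on the population objective $F(w) = \Ex_P[f(w,Z)]$ — the $\hlambda$-update is the projection of a gradient step onto $\reals_+$, and by Danskin's theorem the inner $\argmax$ supplies valid (sub)gradients of $r_\lambda$ in both $\theta$ and $\lambda$ — and then to reproduce the standard ``descent-lemma'' analysis of SGD for $L$-strongly-smooth, possibly nonconvex objectives due to \citet{ghadimi2013Stochastic}. The three ingredients are: (i) the mini-batch gradient $G_t \triangleq \frac1B\sum_{b=1}^B\partial_w f(w_t,Z_{t_b})$ is an unbiased estimate of $\partial F(w_t)$ whose variance is controlled by $\sigma^2$ (read as a per-sample bound, so that averaging $B$ i.i.d.\ draws gives variance $\le \sigma^2/B$); (ii) $L$-smoothness converts one gradient step into a one-step expected decrease of $F$; (iii) telescoping over $t$ and optimizing the constant step size gives the stated rate.

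Concretely, write $\mathcal F_t$ for the history through iteration $t$, so $\Ex[G_t\mid\mathcal F_t] = \partial F(w_t)$ and $\Ex[\|G_t - \partial F(w_t)\|_2^2\mid\mathcal F_t]\le \sigma^2/B$. Applying $L$-strong smoothness with $w=w_t$, $w'=w_{t+1}=w_t-\eta_tG_t$ gives
\[
F(w_{t+1}) \le F(w_t) - \eta_t\langle\partial F(w_t),G_t\rangle + \tfrac{L\eta_t^2}{2}\|G_t\|_2^2.
\]
Taking $\Ex[\cdot\mid\mathcal F_t]$, using unbiasedness and the bias--variance split $\Ex[\|G_t\|_2^2\mid\mathcal F_t] = \|\partial F(w_t)\|_2^2 + \Ex[\|G_t-\partial F(w_t)\|_2^2\mid\mathcal F_t] \le \|\partial F(w_t)\|_2^2 + \sigma^2/B$, yields
\[
\Ex[F(w_{t+1})\mid\mathcal F_t] \le F(w_t) - \eta_t\big(1-\tfrac{L\eta_t}{2}\big)\|\partial F(w_t)\|_2^2 + \tfrac{L\eta_t^2\sigma^2}{2B}.
\]
With the prescribed constant step size one checks $L\eta_t\le1$ (for $T$ large enough), hence $1-L\eta_t/2\ge\tfrac12$; then take total expectations, telescope over $t=1,\dots,T$, and bound $F(w_1)-\Ex[F(w_{T+1})]\le F(w_1)-\inf_w F(w)\le\eps_0$. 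Rearranging gives $\frac1T\sum_{t=1}^T\Ex[\|\partial F(w_t)\|_2^2]\le\frac{2\eps_0}{\eta_tT}+\frac{L\eta_t\sigma^2}{B}$, and substituting $\eta_t=(2B\eps_0/(L\sigma^2T))^{1/2}$ — the minimizer of that right-hand side — produces the claimed $\sigma\,(8L\eps_0/(BT))^{1/2}$ bound.

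The step I would be most careful about is the projection in the $\hlambda$-update (and the constraint $\theta\in\Theta$, if imposed): strictly, the descent inequality must be replaced by its projected, ``gradient-mapping'' counterpart, using nonexpansiveness of the Euclidean projection, and the conclusion then controls the gradient mapping rather than $\partial F$ itself. The cleanest route is simply to invoke the constrained/composite version of \citeauthor{ghadimi2013Stochastic}'s theorem verbatim; the sketch above is the unconstrained special case (or the case where the iterates remain interior). The remaining work — tracking the numerical constants and the placement of $B$, and stating the hypothesis on $\sigma^2$ consistently (per-sample versus per-batch) — is routine but the most error-prone part. It is also worth noting explicitly that the conclusion bounds the trajectory-averaged expected squared gradient, equivalently $\Ex[\|\partial F(w_R)\|_2^2]$ for $R$ drawn uniformly on $\{1,\dots,T\}$, which is the standard notion of an approximate stationary point in nonconvex stochastic optimization.
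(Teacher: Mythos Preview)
The paper does not actually prove this theorem: it is stated in Appendix~\ref{sec:proofs} with the remark ``This is a well-known result in stochastic approximation, and we state it here for completeness,'' and is simply attributed to \citet{ghadimi2013Stochastic} without any argument. So there is no paper proof to compare against; the intended ``proof'' is the citation itself.

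Your sketch is the standard descent-lemma derivation of the Ghadimi--Lan rate and is correct. Your observation about the inconsistency in how $\sigma^2$ is stated is on point: as written, the hypothesis bounds the variance of the \emph{mini-batch} gradient, yet both the prescribed step size $\eta_t=(2B\eps_0/(L\sigma^2T))^{1/2}$ and the conclusion carry an explicit $B$, which only makes sense under your per-sample reading (mini-batch variance $\le\sigma^2/B$). Under that reading your arithmetic reproduces the intended bound $\sigma(8L\eps_0/(BT))^{1/2}$; note the paper's displayed bound appears to be missing the square root. Your caveat about the projection onto $\reals_+$ requiring the gradient-mapping version of the argument is also well taken, though again the paper does not engage with this and simply defers to the cited reference.
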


In other words, Algorithm \ref{alg:sensei} finds an $\eps$-stationary point of \eqref{eq:parametric-fair-regularization} in at most $O(\frac{1}{\eps^2})$ iterations. If $F$ has more structure (\eg\ convexity), then Algorithm \ref{alg:sensei} may converge faster.

\subsection{Proof of duality results in Section \ref{sec:SenSeI}}



\begin{theorem}[Theorem \ref{thm:fair-regularizer-dual-form}]
If $d_{\cY}(h(x),h(x')) - \lambda d_{\cX}(x,x')$ is continuous (in $(x,x')$) for any $\lambda \ge 0$, then
\[
\begin{aligned}
R(h) = \inf\nolimits_{\lambda \ge 0}\{\lambda\eps + \Ex_{P_X}\big[r_\lambda(h,X)\big]\}, \\
r_\lambda(h,X) \triangleq \sup\nolimits_{x'\in\cX}\{d_{\cY}(h(X),h(x')) - \lambda d_{\cX}(X,x')\}.
\end{aligned}
\]
\end{theorem}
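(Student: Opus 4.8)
The plan is to prove the identity by Lagrangian duality applied to the transport problem \eqref{eq:fair-regularizer} that defines $R(h)$: dualize the moment constraint $\Ex_\Pi[d_{\cX}(X,X')]\le\eps$ with a multiplier $\lambda\ge0$ while keeping the marginal constraint $\Pi(\cdot,\cX)=P_X$ explicit throughout (this is the duality underlying Wasserstein DRO; cf.\ \citet{sinha2017Certifying}).

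\emph{Weak duality.} Since $\eps-\Ex_\Pi[d_{\cX}(X,X')]\ge0$ exactly on the feasible set, for any $\Pi$ with $\Pi(\cdot,\cX)=P_X$ the quantity $\inf_{\lambda\ge0}\lambda(\eps-\Ex_\Pi[d_{\cX}(X,X')])$ equals $0$ if $\Pi$ is feasible and $-\infty$ otherwise; hence
\[
R(h)=\sup_{\Pi(\cdot,\cX)=P_X}\ \inf_{\lambda\ge0}\ \big\{\lambda\eps+\Ex_\Pi\big[d_{\cY}(h(X),h(X'))-\lambda d_{\cX}(X,X')\big]\big\}.
\]
The max-min inequality bounds this by $\inf_{\lambda\ge0}\sup_{\Pi(\cdot,\cX)=P_X}\{\,\cdot\,\}$. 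For fixed $\lambda\ge0$, disintegrate $\Pi(dx,dx')=P_X(dx)\,\Pi_x(dx')$: the integrand is maximized pointwise in $x'$, so the inner supremum is at most $\Ex_{P_X}[r_\lambda(h,X)]$, and since $d_{\cY}(h(x),h(x'))-\lambda d_{\cX}(x,x')$ is continuous and $(\cX,d_{\cX})$ is Polish, $x\mapsto r_\lambda(h,x)$ is Borel measurable and for every $\eta>0$ admits a measurable $\eta$-optimal selection $x\mapsto x_\eta'(x)$; pushing $P_X$ forward along $x\mapsto(x,x_\eta'(x))$ shows the inner supremum equals $\Ex_{P_X}[r_\lambda(h,X)]$. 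This yields $R(h)\le\inf_{\lambda\ge0}\{\lambda\eps+\Ex_{P_X}[r_\lambda(h,X)]\}$.

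\emph{Strong duality.} For the reverse inequality, observe that \eqref{eq:fair-regularizer} maximizes a linear functional of $\Pi$ over the convex set of couplings with first marginal $P_X$ subject to one linear inequality, and that the diagonal coupling $(\mathrm{id},\mathrm{id})_{\#}P_X$ has expected transport cost $0<\eps$, so it is a strict (Slater) feasible point whenever $\eps>0$. Lagrangian duality for convex programs with a Slater point then gives zero duality gap, i.e.\ $R(h)=\inf_{\lambda\ge0}\{\lambda\eps+\Ex_{P_X}[r_\lambda(h,X)]\}$. Equivalently, one can work with the value function $\phi(e)\triangleq\sup\{\Ex_\Pi[d_{\cY}(h(X),h(X'))]:\Pi(\cdot,\cX)=P_X,\ \Ex_\Pi[d_{\cX}(X,X')]\le e\}$, which is concave and non-decreasing (convex combinations of couplings are again admissible) and, using the continuity hypothesis, closed; the claimed dual expression is precisely its Fenchel biconjugate at $\eps$, the restriction to $\lambda\ge0$ being harmless because $\phi$ is non-decreasing, so $R(h)=\phi(\eps)=\inf_{\lambda\ge0}\{\lambda\eps+\Ex_{P_X}[r_\lambda(h,X)]\}$.

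The main obstacle is the strong-duality step: the set of couplings with a prescribed marginal need not be weak-$\ast$ compact unless $\cX$ is compact, so Sion's minimax theorem does not apply directly, and one must instead rely on the Slater point $\eps>0$ together with concavity and closedness of the value function. A secondary point is the measurable-selection argument in the weak-duality step (measurability of $r_\lambda(h,\cdot)$ and existence of near-optimal couplings), which is exactly what the continuity hypothesis on $d_{\cY}(h(x),h(x'))-\lambda d_{\cX}(x,x')$ is there to supply, in combination with $\cX$ being Polish.
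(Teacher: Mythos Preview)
Your proposal is correct and follows essentially the same route as the paper: dualize the single moment constraint with $\lambda\ge 0$, verify Slater via the diagonal coupling $(\mathrm{id},\mathrm{id})_{\#}P_X$, and then identify $\sup_{\Pi:\Pi(\cdot,\cX)=P_X}\Ex_\Pi[d_{\cY}\circ h-\lambda d_{\cX}]$ with $\Ex_{P_X}[r_\lambda(h,X)]$. The only substantive difference is in this last interchange step: the paper invokes the machinery of normal integrands (Corollary~14.34 and Theorem~14.60 of Rockafellar--Wets) to swap the sup and the integral, whereas you disintegrate and appeal to a measurable $\eta$-optimal selector; both arguments rest on the same continuity hypothesis and are equivalent in strength, with the normal-integrand route a bit more turnkey and your selection argument a bit more explicit. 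Your side remark that Sion's theorem is unavailable due to lack of compactness, so one must lean on the Slater point, is exactly the right diagnosis; the paper handles this by citing a specific infinite-dimensional strong-duality theorem (Luenberger, Theorem~8.7.1), which you may want to reference in place of the slightly informal ``Lagrangian duality for convex programs with a Slater point'' or the value-function sketch.
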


\begin{proof}
We abuse notation and denote the function $d_{\cY}(h(x),h(x'))$ as $d_{\cY}\circ h$. We recognize the optimization problem in \eqref{eq:fair-regularizer} as an (infinite dimensional) linear optimization problem:
\[
R(h) \triangleq \left\{\begin{aligned}
& \sup\nolimits_{\Pi:\Delta(\cX\times\cX)} & & \langle\Pi,d_{\cY}\circ h\rangle = \Ex_\Pi\big[d_{\cY}(h(X),h(X'))\big]\\
& \st & & \langle\Pi, d_{\cX}\rangle = \Ex_\Pi\big[d_{\cX}(X,X')\big] \le \eps \\
&            & & \Pi(\cdot,\cX) = P_X,
\end{aligned}\right.
\]
It is not hard check Slater's condition: $d\Pi(x,x') = \ones\{x = x'\}dP(x)$ is strictly feasible. Thus we have strong duality (see Theorem 8.7.1 in \citep{luenberger1968OPTIMIZATION}):
\[
\begin{aligned}
R(h) &= \sup\nolimits_{\Pi:\Pi(\cdot,\cX) = P_X}\inf\nolimits_{\lambda \ge 0}\langle\Pi,d_{\cY}\circ h\rangle + \lambda(\eps - \langle\Pi,d_{\cX}\rangle) \\
&= \sup\nolimits_{\Pi:\Pi(\cdot,\cX) = P_X}\inf\nolimits_{\lambda \ge 0}\lambda\eps + \langle\Pi,d_{\cY}\circ h - \lambda d_{\cX}\rangle \\
&= \inf\nolimits_{\lambda \ge 0}\{\lambda\eps + \sup\nolimits_{\Pi:\Pi(\cdot,\cX) = P_X}\langle\Pi,d_{\cY}\circ h - \lambda d_{\cX}\rangle\},
\end{aligned}
\]
It remains to show 
\begin{equation}
\sup\nolimits_{\Pi:\Pi(\cdot,\cX) = P}\langle d_{\cY}\circ h - \lambda d_{\cX},\Pi\rangle = \Ex_P\big[\sup\nolimits_{x'\in\cX}\{d_{\cY}(h(X),h(x')) - \lambda d_{\cX}(X,x')\}\big].
\label{eq:int-max-interchange}
\end{equation}

\paragraph{$\le$ direction} The integrands in \eqref{eq:int-max-interchange} satisfy 
\[
(d_{\cY}\circ h - \lambda d_{\cX})(x,x') = d_{\cY}(h(x),h(x')) - \lambda d_{\cX}(x,x') \le \sup\nolimits_{x'\in\cX}\big\{d_{\cY}(h(x),h(x')) - \lambda d_{\cX}(x,x')\big\},
\]
so the integrals satisfy the $\le$ version of \eqref{eq:int-max-interchange}. 

\paragraph{$\ge$ direction} Let $\cQ$ be the set of all Markov kernels from $\cX$ to $\cX$. We have
\[
\begin{aligned}
\sup\nolimits_{\Pi:\Pi(\cdot,\cX) = P}\langle d_{\cY}\circ h - \lambda d_{\cX},\Pi\rangle &= \sup\nolimits_{Q\in\cQ}\int_{\cX\times\cX}d_{\cY}(h(x),h(x')) - \lambda d_{\cX}(x,x')dQ(x'\mid x)dP(x) \\
&\ge \sup\nolimits_{T:\cX\to\cX}\int_{\cX}d_{\cY}(h(x),h(T(x))) - \lambda d_{\cX}(x,T(x))dP(x),
\end{aligned}
\]
where we recalled $Q(A\mid x) = \ones\{T(x)\in A\}$ is a Markov kernel in the second step. (Technically, in the second step, we only sup over $T$'s that are \emph{decomposable} (see Definition 14.59 in \citet{rockafellar2004Variational}) with respect to $P$, but we gloss over this detail here.) We appeal to the technology of integrands \citep{rockafellar2004Variational} to interchange integration and maximization. We assumed $d_{\cY}\circ h - \lambda d_{\cX}$ is continuous, so it is a normal integrand (see Corollary 14.34 in \citet{rockafellar2004Variational}). Thus it is OK to interchange integration and maximization (see Theorem 14.60 in \citet{rockafellar2004Variational}):
\[
\sup\nolimits_{T:\cX\to\cX}\int_{\cX}d_{\cY}(h(x),h(T(x))) - \lambda d_{\cX}(x,T(x))dP(x) =  \int_{\cX}\sup\nolimits_{x'\in\cX}\big\{d_{\cY}(h(x),h(x')) - \lambda d_{\cX}(x,x')\big\},dP(x).
\]
This shows the $\ge$ direction of \eqref{eq:int-max-interchange}.
\end{proof}

We remark that it is not necessary to rely on the technology of normal integrands to interchange expectation and maximization in the proof of Theorem \ref{thm:fair-regularizer-dual-form}. For example, \cite{blanchet2016Quantifying} prove a similar strong duality result without resorting to normal integrands. We do so here to simplify the proof.

\subsection{Proofs of generalization results in Section \ref{sec:theory}}

\begin{theorem}[Theorem \ref{thm:fair-regularizer-generalization-error}]
As long as $D_{\cX}$, $D_{\cY}$, $J(\cG)$ are all finite, 
\[\textstyle
\sup_{f\in\cF}|\Ex_{\hP}\big[f(Z)\big] - \Ex_P\big[f(Z)\big]| \le \frac{48(J(\cD) + \frac1\eps D_{\cX}D_{\cY})}{\sqrt{n}} + D_{\cY}(\frac{\log\frac2t}{2n})^{\frac12}
\]
with probability at least $1-t$.
\end{theorem}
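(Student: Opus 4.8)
The plan is to reduce the statement to a textbook uniform-deviation bound over the class of ``dual functions'' produced by Theorem~\ref{thm:fair-regularizer-dual-form}, and then to control the uniform entropy of that class. Take $\cF \triangleq \{x\mapsto r_\lambda(h,x) : h\in\cH,\ 0\le\lambda\le D_{\cY}/\eps\}$, a class of $[0,D_{\cY}]$-valued functions of $x$, which one checks is exactly what is needed. Theorem~\ref{thm:fair-regularizer-dual-form} applies verbatim with $\hP_X$ in place of $P_X$ — the diagonal coupling $d\Pi(x,x')=\mathbf 1\{x=x'\}\,d\hP_X(x)$ still witnesses Slater's condition, and the continuity hypothesis only concerns $d_{\cY}\circ h-\lambda d_{\cX}$ — so $R(h)=\inf_{\lambda\ge 0}\{\lambda\eps+\Ex_{P_X}[r_\lambda(h,X)]\}$ and $\hR(h)=\inf_{\lambda\ge 0}\{\lambda\eps+\Ex_{\hP_X}[r_\lambda(h,X)]\}$. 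Since $0\le r_\lambda(h,\cdot)\le D_{\cY}$ (take $x'=X$ for the lower bound; $d_{\cY}\le D_{\cY}$ and $-\lambda d_{\cX}\le 0$ for the upper), the value at $\lambda=0$ is at most $D_{\cY}$, so both infima are unaffected by restricting $\lambda$ to $[0,D_{\cY}/\eps]$; then $|\inf_\lambda u(\lambda)-\inf_\lambda v(\lambda)|\le\sup_\lambda|u(\lambda)-v(\lambda)|$, together with the cancellation of the common $\lambda\eps$, yields
\[
\sup\nolimits_{h\in\cH}|\hR(h)-R(h)| \le \sup\nolimits_{f\in\cF}\big|\Ex_{\hP_X}[f(X)]-\Ex_{P_X}[f(X)]\big|.
\]
(Under the Polish/continuity assumptions the suprema defining $r_\lambda$ reduce to suprema over a countable dense set, so measurability of the outer sup is not an issue.)

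The crux is an entropy estimate $J(\cF)\le 2J(\cD)+\frac{c}{\eps}D_{\cX}D_{\cY}$ for a universal constant $c$, which I would establish from two Lipschitz facts in the uniform norm: (i) the map sending a function $d$ to $x\mapsto\sup_{x'}\{d(x,x')-\lambda d_{\cX}(x,x')\}$ is $1$-Lipschitz, because a pointwise supremum is a $1$-Lipschitz operation; and (ii) $\lambda\mapsto r_\lambda(h,\cdot)$ is $D_{\cX}$-Lipschitz, because the two integrands differ by $|\lambda-\lambda'|\,d_{\cX}(x,x')\le|\lambda-\lambda'|D_{\cX}$ uniformly, and $r_\lambda$ depends on $h$ only through $d_h=d_{\cY}(h(\cdot),h(\cdot))\in\cD$. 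Composing an $(s/2)$-net of $\cD$ in $\|\cdot\|_\infty$ with an $s/(2D_{\cX})$-net of $[0,D_{\cY}/\eps]$ then gives an $s$-net of $\cF$, so $\log N(\cF,\|\cdot\|_\infty,s)\le\log N(\cD,\|\cdot\|_\infty,s/2)+\log(1+\frac{D_{\cX}D_{\cY}}{\eps s})$. Since $\cF\subset[0,D_{\cY}]$ the entropy integral runs only over $s\in(0,D_{\cY}]$; using $\sqrt{a+b}\le\sqrt a+\sqrt b$, the substitution $u=s/2$ bounds the first term by $2J(\cD)$, while $v=\eps s/(D_{\cX}D_{\cY})$ turns the second into $\frac{D_{\cX}D_{\cY}}{\eps}\int_0^{\eps/D_{\cX}}\sqrt{\log(1+1/v)}\,dv$, and $\int_0^{\eps/D_{\cX}}\sqrt{\log(1+1/v)}\,dv$ is bounded by an absolute constant in the non-degenerate regime $\eps\le D_{\cX}$ (if $\eps> D_{\cX}$ the transport constraint is vacuous). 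In particular finiteness of $J(\cD),D_{\cX},D_{\cY}$ forces $J(\cF)<\infty$.

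The remainder is standard empirical-process machinery. Symmetrization bounds $\Ex\sup_{f\in\cF}|\Ex_{\hP_X}f-\Ex_{P_X}f|$ by a universal constant times the Rademacher complexity of $\cF$, and Dudley's entropy integral bounds the latter by a constant times $J(\cF)/\sqrt n$ (the empirical $L_2$-entropy never exceeds the uniform entropy, which is why using $J$ suffices and makes the bound distribution-free); combining this with the previous paragraph and carrying the universal constants through gives the $48(J(\cD)+\frac1\eps D_{\cX}D_{\cY})/\sqrt n$ term. Finally, $\sup_{f\in\cF}|\Ex_{\hP_X}f-\Ex_{P_X}f|$, as a function of the $n$ i.i.d.\ samples, changes by at most $D_{\cY}/n$ when one sample is replaced (each $f$ has range at most $D_{\cY}$), so McDiarmid's bounded-differences inequality — applied to the two one-sided suprema, with a union bound over them — adds the deviation term $D_{\cY}(\log\frac2t/2n)^{1/2}$ at confidence $1-t$. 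Summing the two contributions gives the theorem.

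I expect the entropy step to be the main obstacle: one must check that \emph{both} of the simplifications encoded in $r_\lambda$ — the inner $\sup_{x'}$ and the affine dependence on $\lambda$ — are Lipschitz in the \emph{uniform} (not merely $L_2$) norm, so that the covering numbers of $\cD$ and of the bounded $\lambda$-interval genuinely multiply, and then to extract \emph{precisely} the term $\frac1\eps D_{\cX}D_{\cY}$ out of the one-dimensional $\lambda$-interval with no stray logarithmic factor. Everything else — the reduction via Theorem~\ref{thm:fair-regularizer-dual-form}, the truncation $\lambda\le D_{\cY}/\eps$, and the universal constants in symmetrization, Dudley's inequality, and McDiarmid — is routine, though worth stating carefully.
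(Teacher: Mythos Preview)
Your proposal is correct and follows essentially the same route as the paper: reduce via the dual form to the class $\cF=\{r_\lambda(h,\cdot):h\in\cH,\ \lambda\in[0,D_{\cY}/\eps]\}$, factor its covering numbers into the $\cD$-covering and the one-dimensional $\lambda$-interval, and finish with symmetrization, Dudley's entropy integral, and bounded differences. The only cosmetic differences are that the paper obtains the one-sided bounds by plugging in the optimal $\lambda_*$ (rather than your $|\inf u-\inf v|\le\sup|u-v|$) and packages the entropy step as sub-Gaussianity of the Rademacher process with respect to a custom metric on $\cH\times[0,\bar\lambda]$ rather than bounding the uniform-norm covering of $\cF$ directly via your two Lipschitz observations.
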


\begin{proof}
Let
\[
\hR(h) \triangleq \left\{\begin{aligned}
& \max\nolimits_{\Pi\in\Delta(\cX\times\cX)} & & \Ex_\Pi\big[d_{\cY}(h(X),h(X'))\big] \\
& \st      & & \Ex_\Pi\big[d_{\cX}(X,X')\big] \le \eps \\
&          & & \Pi(\cdot,\cX) = \hP_X,
\end{aligned}\right\}.
\]
where $\hP_X$ is the empirical distribution of the inputs in the training set. By Theorem \ref{thm:fair-regularizer-dual-form}, we have
\begin{equation}
\begin{aligned}
\hR(h) - R(h) &= \inf\nolimits_{\lambda \ge 0}\{\lambda\eps + \Ex_{\hP_X}\big[r_\lambda(h,X)\big]\} - \inf\nolimits_{\lambda \ge 0}\{\lambda\eps + \Ex_{P_X}\big[r_\lambda(h,X)\big]\}\\
&= \inf\nolimits_{\lambda \ge 0}\{\lambda\eps + \Ex_{\hP_X}\big[r_\lambda(h,X)\big]\} - \lambda_*\eps - \Ex_{\hP_X}\big[r_{\lambda_*}(h,X)\big] \\
&\le \Ex_{\hP_X}\big[r_{\lambda_*}(h,X)\big]\} - \Ex_{P_X}\big[r_{\lambda_*}(h,X)\big],
\end{aligned}
\label{eq:hR-R}
\end{equation}
where $\lambda_* \in \argmin_{\lambda \ge 0}\lambda\eps + \Ex_{P_X}\big[r_\lambda(h,X)\big]$. The infimum is attained because 
\[
\inf\nolimits_{\lambda \ge 0}\{\lambda\eps + \Ex_{P_X}\big[r_\lambda(h,X)\big]\}
\]
is an strictly feasible (infinite-dimensional) linear optimization problem (see proof of Theorem \ref{thm:fair-regularizer-dual-form}). To bound $\lambda_*$, we observe that $r_\lambda(h,X) \ge 0$ for any $h\in\cH$, $\lambda \ge 0$:
\[
\begin{aligned}
r_\lambda(h,X) &= \sup\nolimits_{x'\in\cX}\{d_{\cY}(h(X),h(x')) - \lambda d_{\cX}(X,x')\} \\
&\ge d_{\cY}(h(X),h(X)) - \lambda d_{\cX}(X,X).
\end{aligned}
\]
This implies 
\[
R(h) = \lambda_*\eps + \Ex_{P_X}\big[r_{\lambda_*}(h,X)\big] \ge \lambda_*\eps.
\]
We rearrange to obtain a bound on $\lambda_*$:
\begin{equation}
\lambda_* \le \frac1\eps R(h) \le \frac1\eps D_{\cY} \triangleq \bar{\lambda}.
\label{eq:lambda-bd}
\end{equation}
This is admittedly a crude bound, but it is good enough here. Similarly,
 \begin{equation}
R(h) - \hR(h) \le \Ex_{P_X}\big[r_{\hlambda_*}(h,X)\big]\} - \Ex_{\hP_X}\big[r_{\hlambda_*}(h,X)\big],
\label{eq:R-hR}
\end{equation}
where $\hlambda_* \in \argmin_{\lambda \ge 0}\lambda\eps + \Ex_{\hP_X}\big[r_\lambda(h,X)\big]$, and $\hlambda_* \le \bar{\lambda}$. Combining \eqref{eq:hR-R} and \eqref{eq:R-hR}, we obtain
\[
|\hR(h) - R(h)| \le \sup\nolimits_{f\in\cF}|\Ex_{\hP}\big[f(Z)\big] - \Ex_P\big[f(Z)\big]|,
\]
where $\cF\triangleq\{r_\lambda(h,\cdot)\mid h\in\cH$, $\lambda\in[0,\bar{\lambda}]\}$. It is possible to bound $\sup_{f\in\cF}|\Ex_{\hP}\big[f(Z)\big] - \Ex_P\big[f(Z)\big]|$ with results from statistical learning theory. First, we observe that the functions in $\cF$ are bounded:
\[
0 \le r_\lambda(h,X) \le \frac1\eps\sup\nolimits_{y,y'\in\cY}d_{\cY}(y,y').
\]
Thus $\sup_{f\in\cF}|\Ex_{\hP}\big[f(Z)\big] - \Ex_P\big[f(Z)\big]|$ has bounded differences inequality, so it concentrates sharply around its expectation. By the bounded-differences inequality and a standard symmetrization argument, 
\[
\sup\nolimits_{f\in\cF}|\Ex_{\hP}\big[f(Z)\big] - \Ex_P\big[f(Z)\big]| \le 2\mathfrak{R}_n(\cF) +  D_{\cY}(\frac{\log\frac2t}{2n})^{\frac12}
\]
with probability at least $1-t$, where $\mathfrak{R}_n(\cF)$ is the Rademacher complexity of $\cF$: 
\[\textstyle
\mathfrak{R}_n(\cF) = \Ex\big[\sup_{f\in\cF}\frac1n\sum_{i=1}^n\sigma_if(Z_i)\big].
\]
It remains to study $\mathfrak{R}_n(\cF)$. First, we show that the $\cF$-indexed Rademacher process $X_f \triangleq \frac1n\sum_{i=1}^n\sigma_if(Z_i)$ is sub-Gaussian with respect to to the metric
\[
d_{\cF}((h_1,\lambda_1),(h_2,\lambda_2)) \triangleq \sup\nolimits_{x_1,x_2\in\cX}|d_{\cY}(h_1(x_1),h_1(x_2)) - d_{\cY}(h_2(x_1),h_2(x_2))| + D_{\cX}|\lambda_1 - \lambda_1|:
\]
\[
\begin{aligned}
&\Ex\big[\exp(t(X_{f_1} - X_{f_2}))\big] \\
&\textstyle\quad= \Ex\big[\exp\big(\frac{t}{n}\sum_{i=1}^n\sigma_i(r_{\lambda_1}(h_1,X_i) - r_{\lambda_2}(h_2,X_i))\big)\big] \\
&\textstyle\quad= \Ex\big[\exp\big(\frac{t}{n}\sigma(r_{\lambda_1}(h_1,X_i) - r_{\lambda_2}(h_2,X_i))\big)\big]^n \\
&\textstyle\quad= \Ex\big[\exp\big(\frac{t}{n}\sigma(\sup_{x_1'\in\cX}\inf_{x_2'\in\cX}d_{\cY}(h_1(X_i),h_1(x_1')) - \lambda_1d_{\cX}(x_1,X) - d_{\cY}(h_2(X_i),h_2(x_2')) + \lambda_2d_{\cX}(X,x_2)))\big)\big]^n \\
&\textstyle\quad\le \Ex\big[\exp\big(\frac{t}{n}\sigma(\sup_{x_1\in\cX}d_{\cY}(h_1(X_i),h_1(x_1')) - d_{\cY}(h_2(X_i),h_2(x_1')) + (\lambda_2 - \lambda_1)d_{\cX}(x_1,X)))\big)\big]^n \\
&\textstyle\quad\le \exp\big(\frac12t^2d_{\cF}(h_1,h_2)\big).
\end{aligned}
\]
Let $N(\cF,d_{\cF},\eps)$ be the $\eps$-covering number of $\cF$ in the $d_{\cF}$ metric. We observe
\begin{equation}\textstyle
N(\cF,d_{\cF},\eps) \le N(\cD,\|\cdot\|_\infty,\frac{\eps}{2})\cdot N([0,\bar{\lambda}],|\cdot|,\frac{\eps}{2D_{\cX}}).
\label{eq:DRLossCoveringNo}
\end{equation}
By Dudley's entropy integral, 
\[
\begin{aligned}
\mathfrak{R}_n(\cF) &\le \frac{12}{\sqrt{n}}\int_0^\infty\log N(\cF,d_{\cF},\eps)^{\frac12}d\eps \\
&\le \frac{12}{\sqrt{n}}\int_0^\infty\big(\log N({\textstyle\cD,\|\cdot\|_\infty,\frac{\eps}{2}}) + N\big({\textstyle[0,\bar{\lambda}],|\cdot|,\frac{\eps}{2D_{\cX}}}\big)\big)^{\frac12}d\eps \\
&\le \frac{12}{\sqrt{n}}\left(\int_0^\infty\log N({\textstyle\cD,\|\cdot\|_\infty,\frac{\eps}{2}})^{\frac12}d\eps + \int_0^\infty N\big({\textstyle[0,\bar{\lambda}],|\cdot|,\frac{\eps}{2D_{\cX}}}\big)^{\frac12}d\eps\right) \\
&\le \frac{24J(\cD)}{\sqrt{n}} + \frac{24D_{\cX}\bar{\lambda}}{\sqrt{n }}\int_0^{\frac12}{\textstyle\log(\frac1\eps)}d\eps.
\end{aligned}
\]
We check that $\int_0^{\frac12}\log(\frac1\eps)d\eps < 1$ to arrive at Theorem \ref{thm:fair-regularizer-generalization-error}.
\end{proof}

The chief technical novelty of this proof is the bound on $\lambda_*$ in terms of the diameter of the output space. This bound allows us to restrict the relevant function class in a way that allows us to appeal to standard techniques from empirical process theory to obtain uniform convergence results. In prior work (\eg\ \citet{lee2017Minimax}), this bound relies on smoothness properties of the loss, but this precludes non-smooth $d_{\cY}$ in our problem setting.

\begin{corollary}
Assume there is $h_0\in\cH$ such that $L(h_0) + \rho R(h_0) < \delta_0$. As long as $D_{\cX}$, $D_{\cY}$, $J(\cL)$, and $J(\cG)$ are all finite, any global minimizer $\hat{h} \in \argmin_{h\in\cH}\hL(h) + \rho\hR(h)$ satisfies
\[
L(\hat{h}) + \rho R(\hat{h}) \le \delta_0 + 2\left(\frac{24J(\cL) + 48\rho(J(\cD) + \frac1\eps D_{\cX}D_{\cY})}{\sqrt{n}} +  (\bar{L}+\rho D_{\cY})(\frac{\log\frac2t}{2n})^{\frac12}\right)
\]
with probability at least $1-2t$.
\end{corollary}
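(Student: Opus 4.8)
The plan is to derive a standard oracle inequality for the regularized empirical risk minimizer $\hat h$. Write $G(h) \triangleq L(h) + \rho R(h)$ for the population objective and $\widehat{G}(h) \triangleq \hL(h) + \rho\hR(h)$ for its empirical counterpart, and set $\Delta_n \triangleq \sup_{h\in\cH}|\widehat{G}(h) - G(h)|$. By the triangle inequality, $\Delta_n \le \sup_{h\in\cH}|\hL(h) - L(h)| + \rho\sup_{h\in\cH}|\hR(h) - R(h)|$. Once $\Delta_n$ is controlled on a good event, the conclusion follows from the usual ERM comparison: using optimality of $\hat h$ for $\widehat{G}$ and feasibility of $h_0$,
\[
G(\hat h) \le \widehat{G}(\hat h) + \Delta_n \le \widehat{G}(h_0) + \Delta_n \le G(h_0) + 2\Delta_n < \delta_0 + 2\Delta_n .
\]

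First I would bound the fair-regularizer deviation. Theorem \ref{thm:fair-regularizer-generalization-error} gives directly, with probability at least $1-t$,
\[
\sup_{h\in\cH}|\hR(h) - R(h)| \le \frac{48\big(J(\cD) + \frac1\eps D_{\cX}D_{\cY}\big)}{\sqrt n} + D_{\cY}\Big(\frac{\log\frac2t}{2n}\Big)^{1/2}.
\]
Second I would bound the loss-class deviation by exactly the machinery used in that proof: since $D_{\cY}<\infty$ the losses $\ell_h$ are uniformly bounded by some constant $\bar L$, so the bounded-differences inequality together with a symmetrization step gives $\sup_{h\in\cH}|\hL(h)-L(h)| \le 2\mathfrak{R}_n(\cL) + \bar L\big(\frac{\log\frac2t}{2n}\big)^{1/2}$ with probability at least $1-t$, while sub-Gaussianity of the Rademacher process in the uniform metric together with Dudley's entropy integral yields $\mathfrak{R}_n(\cL) \le \frac{12}{\sqrt n}J(\cL)$. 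A union bound puts both deviation estimates on one event of probability at least $1-2t$, on which
\[
\Delta_n \le \frac{24J(\cL) + 48\rho\big(J(\cD) + \frac1\eps D_{\cX}D_{\cY}\big)}{\sqrt n} + (\bar L + \rho D_{\cY})\Big(\frac{\log\frac2t}{2n}\Big)^{1/2}.
\]
Substituting this bound on $\Delta_n$ into the ERM comparison above gives the stated inequality.

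I expect no genuine obstacle here: the argument is essentially bookkeeping once Theorem \ref{thm:fair-regularizer-generalization-error} is in hand, since the loss-class deviation bound is a near-verbatim repetition of steps already carried out there (chaining via $J(\cL)$ with the uniform metric). The one point requiring care is to introduce the uniform loss bound $\bar L$ explicitly — it plays for $\cL$ the role that $D_{\cY}$ plays for $\cD$ in Theorem \ref{thm:fair-regularizer-generalization-error} — and to track the factor $\rho$ on the regularizer term and the factor $2$ coming from the ERM comparison (which is why the final bound has the $2(\cdots)$ form). The strictness of the assumed sub-optimality gap ($L(h_0)+\rho R(h_0)<\delta_0$) is what lets us avoid carrying an extra slack term.
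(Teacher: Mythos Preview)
Your proposal is correct and follows essentially the same argument as the paper: the ERM comparison $G(\hat h)\le G(h_0)+2\Delta_n$, the split $\Delta_n\le\sup_h|\hL-L|+\rho\sup_h|\hR-R|$, Theorem~\ref{thm:fair-regularizer-generalization-error} for the second term, bounded differences plus symmetrization and Dudley's integral for the first, and a union bound. The only quibble is that boundedness of $\ell$ is an assumption (giving the constant $\bar L$), not a consequence of $D_{\cY}<\infty$ as you suggest.
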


\begin{proof}
Let $F(h) \triangleq L(h) + \rho R(h)$ and $\hF$ be its empirical counterpart.
The optimality of $\hat{h}$ implies
\[\textstyle
F(\hat{h}) = F(\hat{h}) - \hF(\hat{h}) + \hF(\hat{h}) - \hF(h_0) + \hF(h_0) - F(h_0) + F(h_0) \le \delta_0 + 2\sup_{h\in\cH}|\hF(h) - F(h)|.
\]
We have
\begin{equation}
\sup\nolimits_{h\in\cH}|\hF(h) - F(h)| \le \sup\nolimits_{h\in\cH}|\hL(h) - L(h)| + \rho\sup_{h\in\cH}|\hR(h) - R(h)|.
\label{eq:SRM-ULLN}
\end{equation}
We assumed $\ell$ is bounded, so $\sup_{h\in\cH}|\hL(h) - L(h)|$ has bounded differences inequality, so it concentrates sharply around its expectation. By the bounded-differences inequality and a standard symmetrization argument, 
\[
\sup\nolimits_{h\in\cH}|\hL(h) - L(h)| \le 2\mathfrak{R}_n(\cL) +  \bar{L}(\frac{\log\frac2t}{2n})^{\frac12}
\]
with probability at least $1-t$, where $\mathfrak{R}_n(\cL)$ is the Rademacher complexity of $\cL$. By Dudley's entropy integral, 
\[
\mathfrak{R}_n(\cL) \le \frac{12}{\sqrt{n}}\int_0^\infty\log N(\cL,\|\cdot\|_\infty,\eps)^{\frac12}d\eps,
\]
so the first term on the right side of \eqref{eq:SRM-ULLN} is at most
\[
\sup\nolimits_{h\in\cH}|\hL(h) - L(h)| \le \frac{24J(\cL)}{\sqrt{n}} + \bar{L}(\frac{\log\frac2t}{2n})^{\frac12}
\]
with probability at least $1-t$. Theorem \ref{thm:fair-regularizer-generalization-error} implies the second term on the right side of \eqref{eq:SRM-ULLN} is at most
\[
\sup\nolimits_{h\in\cH}|\hR(h) - R(h)| \le \frac{48(J(\cD) + \frac1\eps D_{\cX}D_{\cY})}{\sqrt{n}} +  D_{\cY}(\frac{\log\frac2t}{2n})^{\frac12}
\]
with probability at least $1-t$. We combine the bounds to arrive at the stated result.
\end{proof}

\section{SenSeI and baselines implementation details}
\label{supp:implementation}

In this section we describe implementation details of all methods and hyperparameter selection to facilitate reproducibility of the experimental results reported in the main text.

\paragraph{Improving balanced accuracy}
All three datasets we consider have noticeable class imbalances: over 80\% comments in toxicity classification are non-toxic; several occupations in the Bias in Bios dataset are scarcely present (see Figure 1 in \citet{de-arteaga2019Bias} for details); about 75\% of individuals in the Adult dataset make below \$50k a year. Because of this class imbalance we choose to report balanced accuracy to quantify classification performance of different methods. Balanced accuracy is simply an average of true positive rates of all classes. To improve balanced accuracy for all methods we use balanced mini-batches following \citet{yurochkin2020Training}, i.e. when sampling a mini-batch we enforce that every class is equally represented.

\paragraph{Fair regularizer distance metric}
Recall that fair regularizer in Definition 2.1 of the main text requires selecting a distance metric on the classifier outputs $d_{\cY}(h(x),h(x'))$. This distance is also required for the implementation of Counterfactual Logit Pairing (CLP) \citep{garg2018Counterfactual}. In a $K$-class problem, let $h(x)\in \reals^K$ denote a vector of $K$ logits of a classifier for an observation $x$, then for both SenSeI and CLP  we define $d_{\cY}(h(x),h(x')) = \frac{1}{K}\|h(x) - h(x')\|_2^2$, i.e. mean squared difference between logits of $x$ and $x'$. This is one of the choices empirically studied by \citet{yang2019Invarianceinducing} for image classification. We defer exploring alternative fairness regularizer distance metrics for future work.

\paragraph{Data processing and classifier architecture}
Data processing and classifier are shared across all methods in all experiments. In Toxicity experiment we utilized BERT \citep{devlin2018BERT} fine-tuned on a random 33\% subset of the data. We downloaded the fine-tuned model from one of the Kaggle kernels.\footnote{\url{https://www.kaggle.com/taindow/bert-a-fine-tuning-example}} In the Bios experiment for each train-test split we fine-tuned BERT-Base, Uncased\footnote{https://github.com/google-research/bert} for 3 epochs with mini-batch size 32, learning rate $2\mathrm{e}{-5}$ and 128 maximum sequence length. In both Toxicity and Bios experiments we obtained 768-dimensional sentence representations by average-pooling token embeddings of the corresponding fined-tuned BERTs. Then we trained a fully connected neural network with one hidden layer consisting of 2000 neurons with ReLU activations using BERT sentence representations as inputs.

For Adult experiment we followed data processing and classifier choice (i.e. 100 hidden units neural network) as described in \citet{yurochkin2020Training}.

\paragraph{Hyperparameters selection}
In Table \ref{sup:table:hyper_names} for each hyperparameter we summarize its meaning, abbreviation, name in the code\footnote{\url{https://openreview.net/attachment?id=DktZb97_Fx&name=supplementary_material}} and methods where it is used.

To select hyperparameters for each experiment we performed a grid search on an independent train-test split. Then we fixed selected hyperparameters and ran 10 experiment repetitions with random train test splits (these results are reported in the main text). Hyperparameter choices for all experiments are summarized in Tables \ref{sup:table:hyper_toxicity}, \ref{sup:table:hyper_bios}, \ref{sup:table:hyper_adult}. For the Adult experiment we duplicated results for all prior methods from \citet{yurochkin2020Training}.

\begin{table}[h]
\centering
\caption{Hyperparameter names and notations}
\label{sup:table:hyper_names}
\begin{tabular}{lccc}
\toprule
{} & notation &  name in code  & relevant methods \\
\midrule
Number of optimization steps &  $E$ & \textit{epoch} & All \\
Mini-batch size & $B$ & \textit{batch\_size} & All  \\
Parameter learning rate $\eta$ & $\eta$ & \textit{lr} & All \\
Subspace attack step size & $s$ & \textit{adv\_step} & SenSeI, SenSR \\
Number of subspace attack steps & $se$ & \textit{adv\_epoch} & SenSeI, SenSR \\
Full attack step size & $f$ & \textit{l2\_attack} & SenSeI, SenSR \\
Number of full attack steps & $fe$ & \textit{adv\_epoch\_full} & SenSeI, SenSR \\
Attack budget $\eps$ & $\eps$ & \textit{ro} & SenSeI, SenSR \\
Fair regularization strength $\rho$ & $\rho$ & \textit{fair\_reg} & SenSeI, CLP \\
\bottomrule
\end{tabular}
\end{table}
                                              
\begin{table}[h]
\centering
\caption{Hyperparameter choices in Toxicity experiment}
\label{sup:table:hyper_toxicity}
\begin{tabular}{lccccccccc}
\toprule
{} & $E$ &  $B$  & $\eta$ & $s$ & $se$  & $f$ & $fe$ & $\eps$ & $\rho$ \\
\midrule
Baseline & 100k & 256 & $1\mathrm{e}{-5}$ & --- & --- & --- & --- & --- & --- \\
SenSR & 100k & 256 & $1\mathrm{e}{-5}$ & 0.1 & 10 & 0 & 0 & 0 & --- \\
SenSeI & 100k & 256 & $1\mathrm{e}{-5}$ & 0.1 & 10 & 0 & 0 & 0 & 5 \\
CLP & 100k & 256 & $1\mathrm{e}{-5}$ & --- & --- & --- & --- & --- & 5 \\
\bottomrule
\end{tabular}
\end{table}

\begin{table}[h]
\centering
\caption{Hyperparameter choices in Bios experiment}
\label{sup:table:hyper_bios}
\begin{tabular}{lccccccccc}
\toprule
{} & $E$ &  $B$  & $\eta$ & $s$ & $se$  & $f$ & $fe$ & $\eps$ & $\rho$ \\
\midrule
Baseline & 100k & 504 & $1\mathrm{e}{-6}$ & --- & --- & --- & --- & --- & --- \\
SenSR & 100k & 504 & $1\mathrm{e}{-5}$ & 0.1 & 50 & 0.01 & 10 & 0.1 & --- \\
SenSeI & 100k & 504 & $1\mathrm{e}{-5}$ & 0.1 & 50 & 0.01 & 10 & 0.1 & 5 \\
CLP & 100k & 504 & $1\mathrm{e}{-5}$ & --- & --- & --- & --- & --- & 5 \\
\bottomrule
\end{tabular}
\end{table}

\begin{table}[h!]
\centering
\caption{Hyperparameter choices in Adult experiment}
\label{sup:table:hyper_adult}
\begin{tabular}{lccccccccc}
\toprule
{} & $E$ &  $B$  & $\eta$ & $s$ & $se$  & $f$ & $fe$ & $\eps$ & $\rho$ \\
\midrule
SenSeI & 200k & 1000 & $1\mathrm{e}{-5}$ & 5 & 50 & 0.001 & 50 & 0.01 & 40 \\
\bottomrule
\end{tabular}
\end{table}

\subsection{Fair metric learning details}
\label{sup:fair-metric}

Following \citet{yurochkin2020Training,mukherjee2020Two} we consider the fair metric of the form $d_{\cX}(x,x') = (x-x')^T\Sigma (x-x')$. We utilize their sensitive subspace idea writing $\Sigma = I - P_{ran(A)}$, i.e. an orthogonal complement projector of the subspace spanned by the columns of $A \in \mathbb{R}^{d \times k}$. Here $A$ encodes the $k$ directions of sensitive variations that should be ignored by the fair metric ($d$ is the data dimension), such as differences in sentence embeddings due to gender pronouns in the Bios experiment or due to identity (counterfactual) tokens in the Toxicity experiment.

\paragraph{Synthetic experiment} In the synthetic experiment in Figure \ref{fig:varying-rho} we consider a fair metric ignoring variation along the $x$-axis coordinate, i.e. $A = [1\,\,\, 0]^T$.

\paragraph{Toxicity experiment} To compute $A$ we utilize FACE algorithm of \citet{mukherjee2020Two} (see section 2.1 and Algorithm 1 in their paper). Here groups of comparable samples are the BERT embeddings of sentences from the train data and their modifications obtained using 25 counterfactuals known at the training time. For example, suppose we have a sentence ``Some people are gay'' in the train data and the list of known counterfactuals is ``gay'', ``straight'' and ``muslim''. Then we can obtain two comparable sentences: ``Some people are straight'' and ``Some people are muslim''. BERT embeddings of the original and two created sentences constitute a group of comparable sentences. Embeddings of groups of comparable sentences are the inputs to Algorithm 1 of \citet{mukherjee2020Two}, which consists of a per-group centering step followed by a singular value decomposition. Taking the top $k=25$ singular vectors gives us matrix of sensitive directions $A$ defining the fair metric.

\paragraph{Bios experiment} We again utilize FACE algorithm of \citet{mukherjee2020Two} to obtain the fair metric. Here the counterfactual modification is based on male and female gender pronouns. For example, sentence ``He went to law school'' is modified to ``She went to law school''. As a result, each group of comparable samples consists of a pair of bios (original and modified). Let $X \in  \mathbb{R}^{n \times d}$ be the data matrix of BERT embeddings of the $n$ train bios, and let $X' \in \mathbb{R}^{n \times d}$ be the corresponding modified bios. Here Algorithm 1 of \citet{mukherjee2020Two} is equivalent to performing SVD on $X - X'$. We take the top $k=25$ singular vectors to obtain sensitive directions $A$ and the corresponding fair metric.

\paragraph{Adult experiment} In this experiment $A$ consists of three vectors: a vector of zeros with 1 in the gender coordinate; a vector of zeros with 1 in the race coordinate; and a vector of logistic regression coefficients trained to predict gender using the remaining features (and 0 in the gender coordinate). This sensitive subspace construction replicates the approach \citet{yurochkin2020Training} utilized in their Adult experiment for obtaining the fair metric. Please see Appendix B.1 and Appendix D in their paper for additional details.

\section{Fairness evaluation metrics definitions}
\label{supp:metrics}

\paragraph{Individual fairness} To compare individual fairness we used two metrics: prediction consistency and Counterfactual Token Fairness (CTF) score of \citet{garg2018Counterfactual}. The idea behind these metrics is to quantify changes in prediction when modifying original data in ways that intuitively should not change behavior of an individually fair classifier.

For Toxicity experiment an individually fair classifier should not change its prediction when a word ``gay'' in a comment is replaced with a word ``straight''. For example, we expect toxicity predictions on ``Some people are gay'' and ``Some people are straight'' to be the same. Following prior work \citep{dixon2018Measuring,garg2018Counterfactual} we considered a set of 50 tokens\footnote{\url{https://github.com/conversationai/unintended-ml-bias-analysis/blob/master/unintended_ml_bias/bias_madlibs_data/adjectives_people.txt}} that should not affect the classifier when interchanged. For any comment that contains at least one of these 50 tokens we can create 49 versions of it via a simple word replacement and evaluate classifier prediction and probability of being toxic for each of the 50 variations (including the original). Prediction consistency is the proportion of comments (with at least one of the 50 tokens) where prediction is the same on \emph{all} 50 variations. CTF score is the average (across all comments with at least one of the 50 tokens) standard deviation of the toxicity probability across 50 variations.

We use similar individual fairness metrics for the Bios experiment. We create a single variation of each bio by interchanging ``he'' and ``she''; ``his'' and ``her''; ``him'' and ``hers''; ``himself'' and ``herself''; ``mr'' and ``ms'' or ``mrs''; original name with a random name from a different gender sampled among those present in the data. Prediction consistency is computed as before using 2 variations (including the original one) of each bio. Note that although there are fewer variations, there are significantly more classes in the Bios dataset. CTF score in the average (across all bios) squared Euclidean distance between the vectors of class probabilities for the 2 bio variations.

In the Adult experiment we compute same individual fairness metrics as in \citet{yurochkin2020Training}. S-Con. (spouse consistency) is the prediction consistency when creating data variations by altering marital status feature. GR-Con. (gender and race consistency) is the prediction consistency when creating data variations by altering race and gender features.

\paragraph{Group fairness} In our experiments we observed that enforcing individual fairness also has positive effect on group fairness metrics.

In the Toxicity experiment we used accuracy parity \citep{zafar2017Fairness,zhao2019Conditional} to quantify group fairness. There are multiple protected groups in the Toxicity dataset (e.g. ``muslim'', ``white'', ``black'', ``homosexual or lesbian'') that correspond to human annotated identity contexts (not necessarily mutually exclusive). To account for this when evaluating accuracy parity we computed accuracies for each of the protected groups and reported their standard deviation. Large standard deviation implies that classifier is significantly more accurate on some protected groups than on the others. Because of the class imbalance we also reported standard deviation of the corresponding balanced accuracies.

For the Bios experiment we used same group fairness metrics as in the prior works studying this dataset \citep{romanov2019What,prost2019Debiasing}. Here protected attribute is binary: male or female genders. Let TPR$_{0,k}$ and TPR$_{1,k}$ denote true positive rates for class $k$ for protected attributes $0$ and $1$. Then TPR gap for class $k$ is Gap$_k = |\text{TPR}_{0,k} - \text{TPR}_{1,k}|$. The summary statistics we report are Gap RMS = $\sqrt{\frac{1}{K}\sum_k \text{Gap}_k^2}$ and Gap ABS = $\frac{1}{K}\sum_k \text{Gap}_k$.

For the Adult experiment we used same group fairness metrics as in \citet{yurochkin2020Training}, which correspond to Gap RMS described above and Gap MAX = $\max_k \text{Gap}_k$, evaluated with respect to race and gender (both are binary protected attributes in the dataset).

\section{Additional fairness-accuracy trade-off results}
\label{supp:trade-off}

\begin{figure*}
\begin{center}
\subfigure[$\eps=0$]{\includegraphics[width=0.24\textwidth]{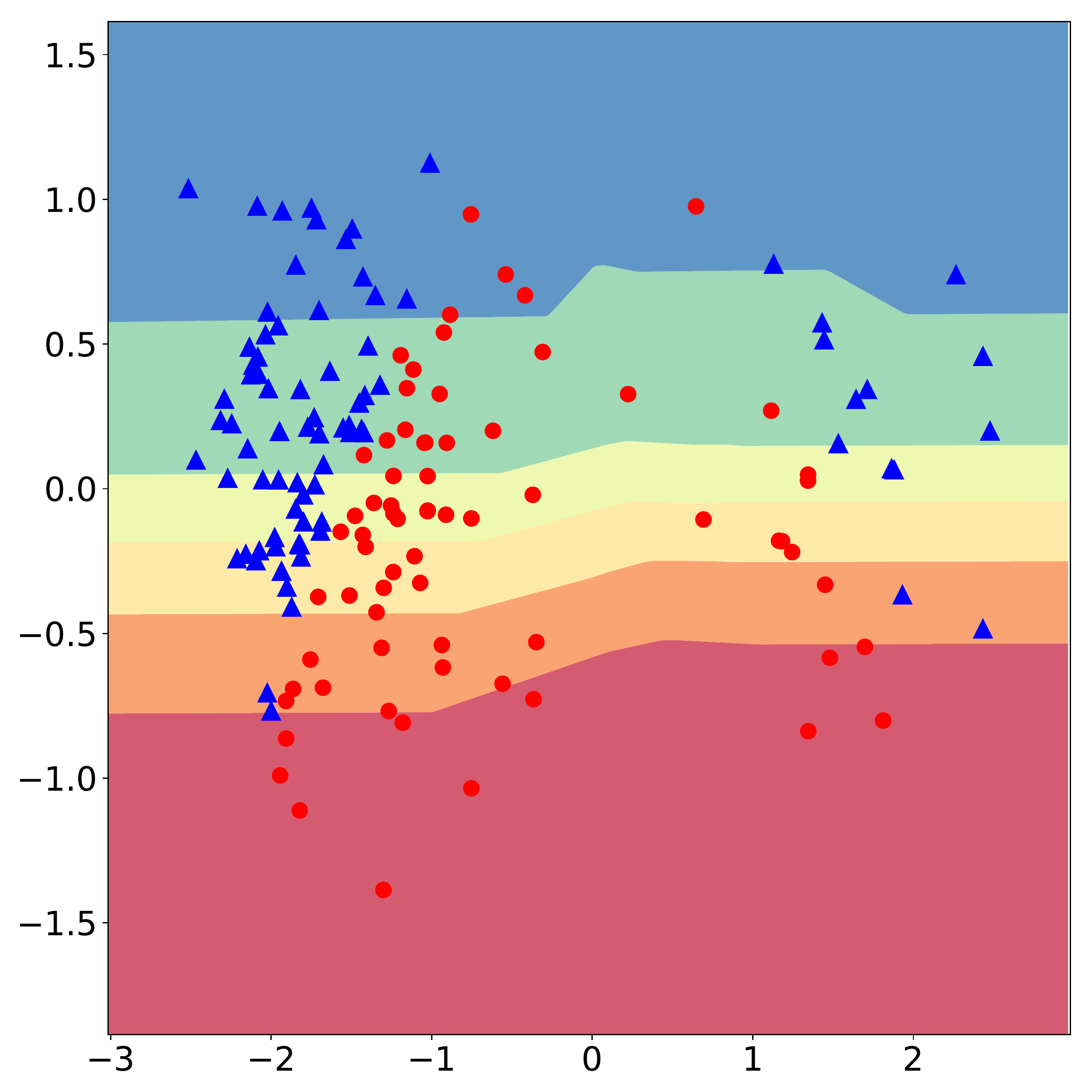}}
\subfigure[$\eps=0.001$]{\includegraphics[width=0.24\textwidth]{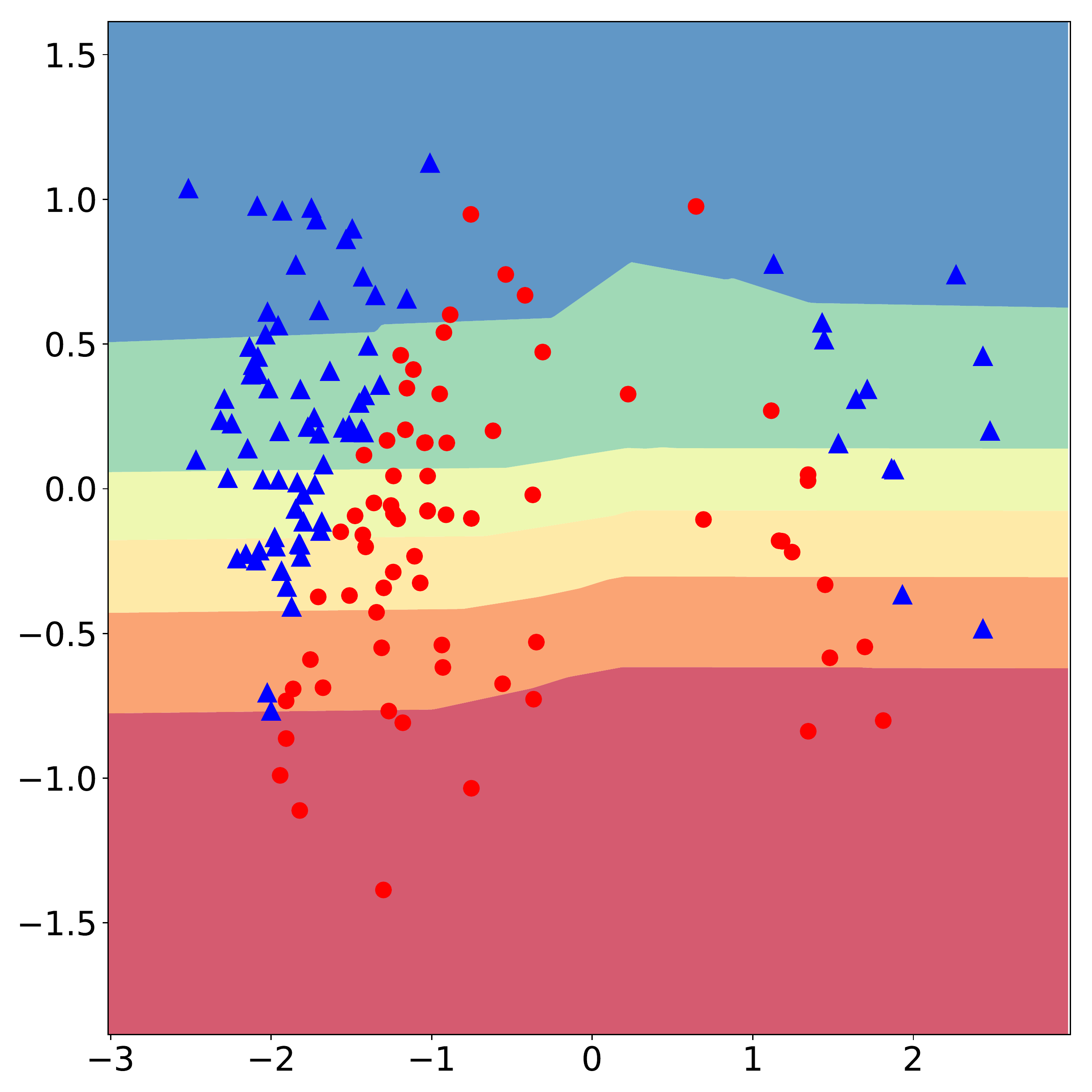}}
\subfigure[$\eps=0.01$]{\includegraphics[width=0.24\textwidth]{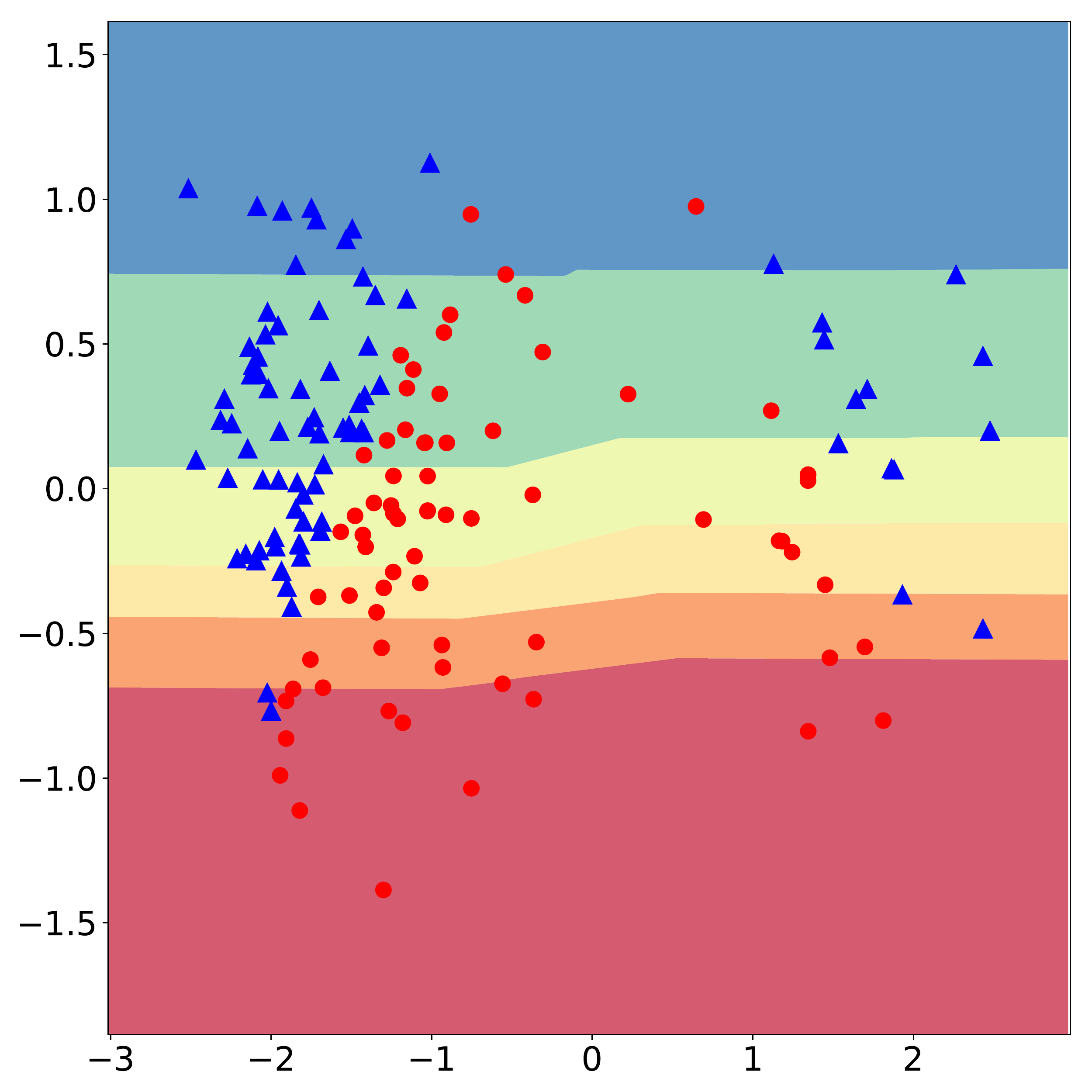}}
\subfigure[$\eps=0.1$]{\includegraphics[width=0.24\textwidth]{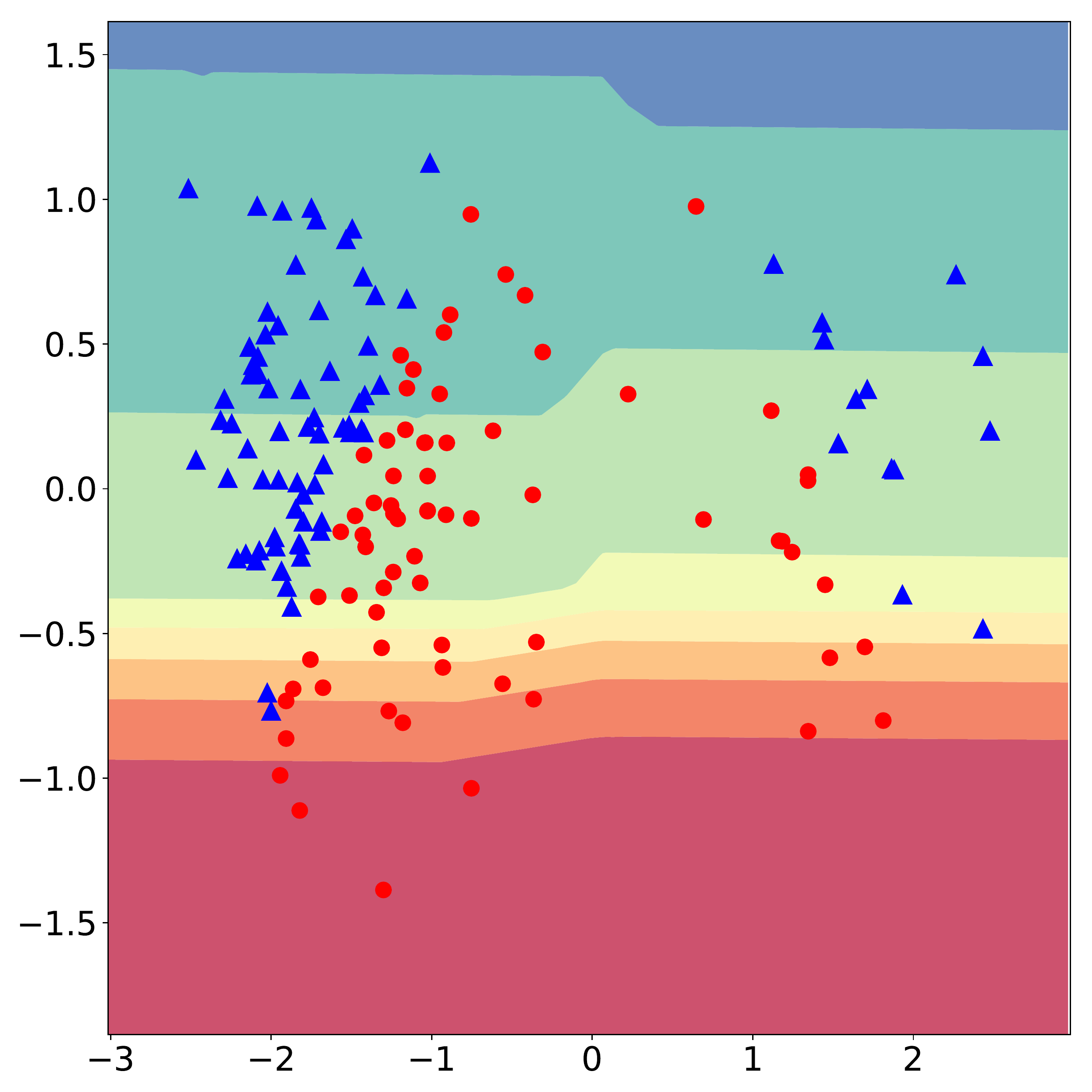}}
\vskip -0.1in
\caption{The decision surface of a one hidden layer neural network trained with SenSR \citep{yurochkin2020Training} as the DRO radius $\eps$ varies. Problem setting is the same as in Figure \ref{fig:varying-rho}. Even for $\eps=0$, SenSR prioritizes fairness over accuracy producing a horizontal decision surface. It is unable to achieve intermediate behaviors of SenSeI trading accuracy and fairness as in Figure \ref{fig:varying-rho} (a,b,c).}
\label{fig:varying-eps}
\end{center}
\vskip -0.2in
\end{figure*}

\paragraph{Synthetic data experiment} In Figure \ref{fig:varying-rho} we demonstrated how SenSeI allows to control fairness-accuracy trade-off in simulations by plotting the decision boundary of the corresponding classifier for varying $\rho$. In Figure \ref{fig:varying-eps} we show the lack of such flexibility in SenSR \citep{yurochkin2020Training}: varying the radius of the DRO ball $\eps$ in their definition of individual fairness results in a horizontal decision boundary even for $\eps=0$. SenSR ties loss to fairness in its objective, and in this experiment loss can be increased significantly for anything but a horizontal decision boundary (fair metric allows free movement along the x-axis and a data-point can be perturbed in horizontal direction even for $\eps=0$).

\paragraph{Toxicity and Bios experiments} In Figure \ref{fig:bios_toxicity_tradeoff} we presented trade-offs between prediction consistency and balanced accuracy for SenSeI and CLP \citep{garg2018Counterfactual} for the Toxicity and Bios experiments. For completeness we also present corresponding CTF score and balanced accuracy trade-offs in Figure \ref{fig:tradeoff-ctf}. As with the prediction consistency, we see that increasing fair regularization strength $\rho$ allows to train classifiers with better individual fairness properties. SenSeI outperforms CLP as it trains classifiers with lower CTF score across all values of $\rho$.

\begin{figure*}[ht]
\begin{center}
\subfigure[Toxicity CTF score ]{\includegraphics[width=0.24\textwidth]{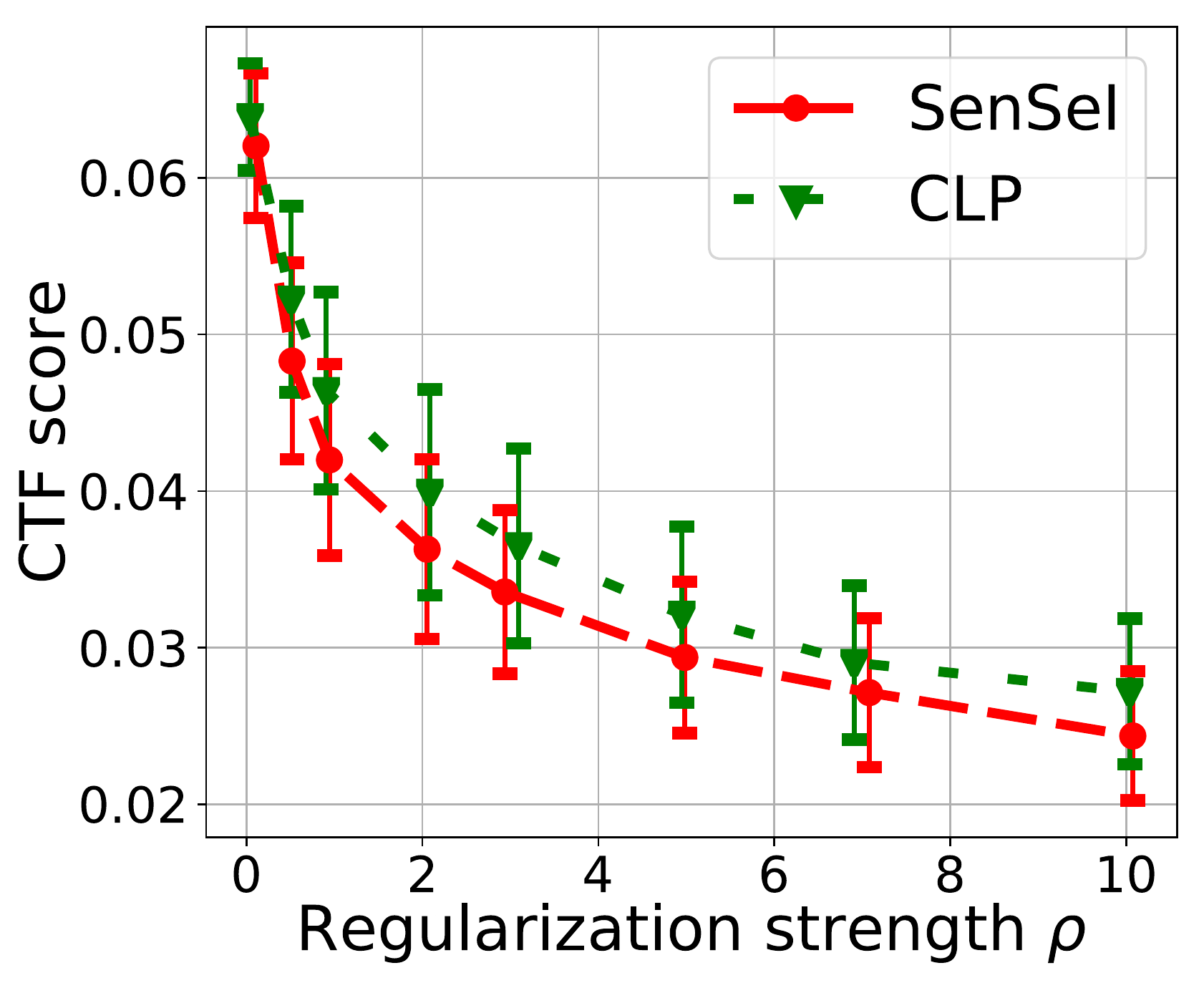}}
\subfigure[Toxicity BA]{\includegraphics[width=0.24\textwidth]{reg_plots/toxicity_ba.pdf}}
\subfigure[Bios CTF score]{\includegraphics[width=0.24\textwidth]{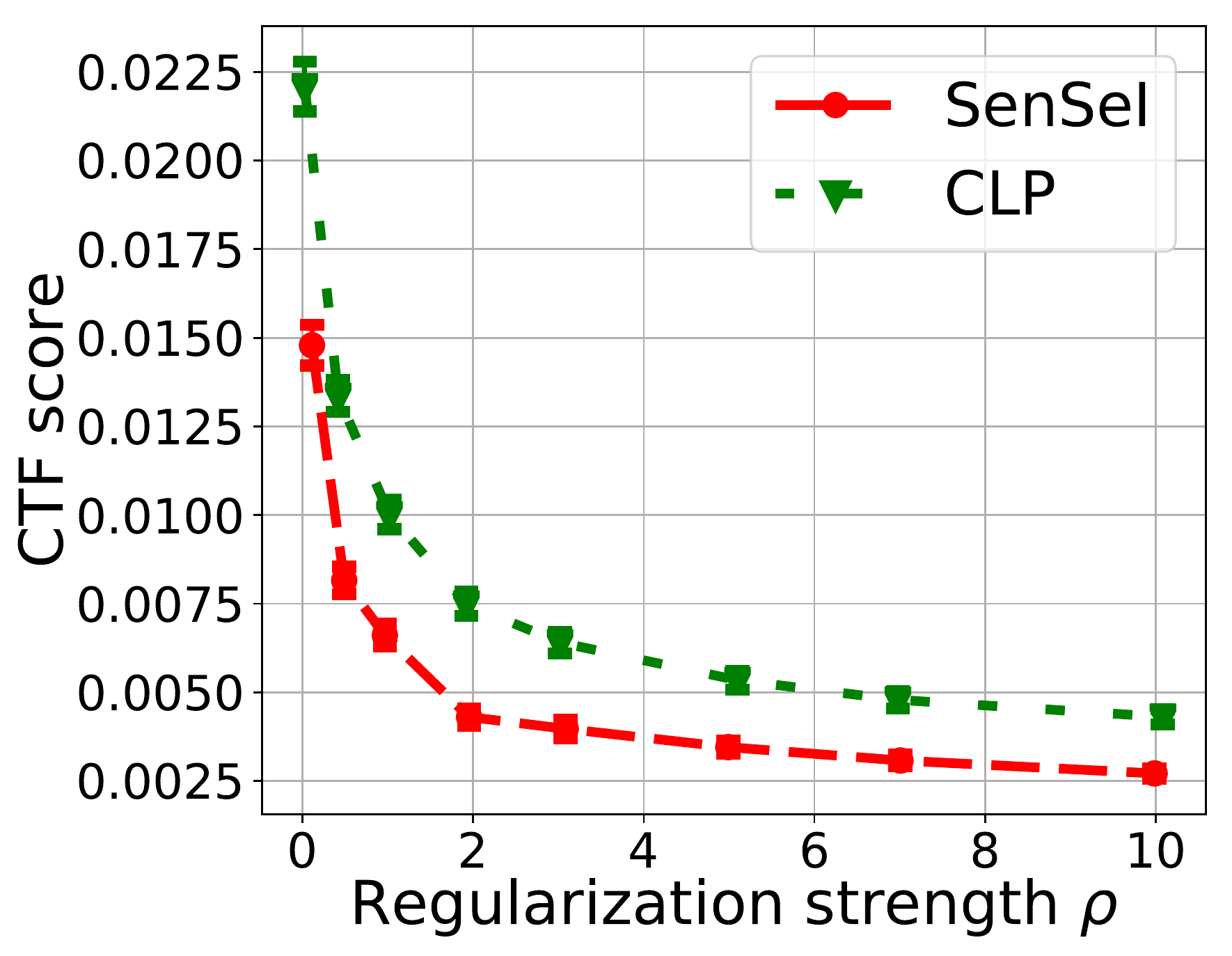}}
\subfigure[Bios BA]{\includegraphics[width=0.24\textwidth]{reg_plots/bios_ba.pdf}}
\vskip -0.1in
\caption{Balanced accuracy (BA) and CTF score trade-off on Toxicity and Bios experiments}
\label{fig:tradeoff-ctf}
\end{center}
\end{figure*}

\fi 

\end{document}
